\documentclass[lettersize,journal]{IEEEtran}
\usepackage{amsmath,amsfonts}
\usepackage{algcompatible}
\usepackage{algorithm}
\usepackage{algpseudocode}%
\usepackage{array}
\usepackage{comment}
\usepackage{textcomp}
\usepackage{stfloats}
\usepackage{url}
\usepackage{verbatim}
\usepackage{graphicx}
\usepackage{xcolor}
\usepackage{cite}
\usepackage{amsmath}
\usepackage{tabularx}
\usepackage[english]{babel}
\usepackage{amsthm}
\usepackage{multirow}
\usepackage{booktabs}
\usepackage{mathtools}
\usepackage{subcaption}
\usepackage{caption}
\usepackage{color,soul}

\usepackage[breaklinks=true]{hyperref}
\newtheorem{lemma}{\textbf{Lemma}}

\def\BibTeX{{\rm B\kern-.05em{\sc i\kern-.025em b}\kern-.08em
    T\kern-.1667em\lower.7ex\hbox{E}\kern-.125emX}}
\begin{document}

\title{Enabling Scalable Topology Inference in Distribution Systems via Constrained Multi-Source Inference
\\
}

\author{Haoran~Li*,~\IEEEmembership{Member,~IEEE,}
Lihao~Mai*,~\IEEEmembership{Student Member,~IEEE,}
Muhao~Guo*,~\IEEEmembership{Student Member,~IEEE,}
Jiaqi~Wu*,~\IEEEmembership{Student Member,~IEEE,}
Yang~Weng,~\IEEEmembership{Senior Member,~IEEE,}
\thanks{Haoran Li, Lihao Mai, Muhao Guo, Jiaqi Wu, and Yang Weng are with the Department of Electrical, Computer and Energy Engineering, Arizona State University, Tempe, AZ, 85281, USA. E-mail: \mbox{\{lhaoran,lmai7,mguo26,jiaqiwu1,yang.weng\}@asu.edu}. 

}

\thanks{*~The first four authors contributed equally to this work.}
\vspace{-10mm}
}

\maketitle
\begin{abstract}
Accurate distribution system topology is essential for outage localization, voltage analytics, and operation of distribution grids, yet maintaining reliable connectivity records remains challenging in practice due to heterogeneous and imperfect utility data. Existing topology identification methods often rely primarily on electrical similarity or spatial records alone, which become unreliable in dense feeders and under inconsistent metadata conditions. This paper formulates distribution topology identification as a constrained inference problem that refines a utility-provided base topology using heterogeneous evidence while enforcing spatial feasibility and physical operational constraints. Instead of reconstructing connectivity from scratch, the proposed framework detects inconsistent assignments, performs localized reconnection within constrained neighborhoods to ensure scalability, and iteratively enforces physical feasibility to produce operationally consistent topology estimates. In addition, a falsification-driven reliability metric evaluates how strongly each inferred connection is supported relative to alternative feasible assignments, enabling utilities to prioritize verification efforts while preserving system-wide observability. The framework is validated using operational data from three feeders comprising more than $8{,}000$ AMI meters in collaboration with a large U.S. utility. Results demonstrate over $95\%$ topology reconstruction accuracy while significantly reducing computational effort compared with global inference approaches. The study further shows that correlation-based methods alone produce ambiguous assignments in dense urban feeders, whereas combining electrical measurements with spatial and operational constraints enables robust and scalable topology recovery under realistic deployment conditions.
\end{abstract}



\vspace{-4mm}

\section{Introduction}
\label{sec:intro}

Accurate knowledge of distribution system topology is foundational to modern grid operations, yet in practice many utilities face persistent challenges in maintaining reliable connectivity records \cite{li2021distribution, ma2023hd}. Our collaboration with a major U.S. utility reveals an operational reality: frequent field activities, construction updates, emergency restorations, and occasional human errors continuously perturb record systems \cite{weng2016distributed}. These inaccuracies degrade outage localization, voltage analytics, load forecasting, and fault diagnosis, ultimately limiting the value extracted from AMI, DERs, and distribution automation technologies \cite{hosseini2020machine, mai2025guaranteed, tajer2021advanced}. Although topology identification has been widely studied, a gap remains between algorithms developed under idealized assumptions and solutions that remain reliable when inference must operate on heterogeneous, imperfect, and utility-scale data.

A central reason for this gap is that many topology identification methods implicitly assume consistent metadata and clean measurements~\cite{guo2025efficient}. Under such assumptions, signal similarity metrics such as voltage correlation or model-based residual tests provide strong inference cues~\cite{guo2023graph}. However, when measurement quality varies, customer–transformer mappings are inconsistent, or spatial records contain systematic errors, these cues become ambiguous~\cite{blakely2020identifying}. Methods effective in curated environments often become brittle in operational feeders. Importantly, this limitation frequently arises not from the core algorithmic idea itself, but from inference formulations that ignore imperfect information, operational constraints, and the combinatorial nature of large-scale connectivity assignment \cite{oikonomou2022core, geth2023data, wang2017efficient}. Algorithms that perform well at laboratory scale may become computationally or operationally infeasible when deployed across thousands of transformers and millions of nodes~\cite{wu2022spatial, guo2025solar}.

A second failure mode stems from pipelines that aggressively sanitize data before inference. Traditional approaches discard measurements or records that fail quality checks. In real utility systems, however, removing data may severely reduce observability, especially in sparsely instrumented regions or rapidly evolving feeders~\cite{zhang2020topology,li2025exarnn}. Moreover, in dense urban feeders, customers connected to different transformers can exhibit nearly identical electrical behavior, making correlation-only grouping intrinsically ambiguous without additional information. These observations suggest that topology inference should not depend on eliminating imperfect inputs, but rather incorporate uncertain information while quantifying inference reliability so results remain actionable at system scale \cite{wang2016phase, hosseini2020machine}.

Motivated by these deployment-driven limitations, this paper formulates distribution topology identification as a \emph{constrained inference problem} under heterogeneous and imperfect data. Instead of relying solely on signal similarity or post hoc filtering, topology recovery is framed as an inference task that simultaneously enforces (i) electrical consistency suggested by measurements, (ii) spatial feasibility implied by asset records, and (iii) physical and operational feasibility required by grid operation. This perspective transforms topology recovery from heuristic data cleaning into structured inference restricted to physically meaningful solutions \cite{frank2016introduction, li2021physical, mortlock2024adaptive}. Crucially, the formulation is designed with scalability constraints in mind, ensuring inference procedures remain computationally viable at utility scale.


The major contribution of this paper is a scalable topology refinement solver that integrates noisy metadata, operational constraints, and multi-source information into a unified inference framework suitable for industrial-scale deployment. The key novelty lies not in introducing new machine learning models, but in identifying algorithmic properties required for scalable topology inference and customizing inference mechanisms accordingly for distribution systems. Specifically, the solver detects inconsistent assignments, performs constrained reconnections within local neighborhoods to control combinatorial growth, and iteratively enforces feasibility constraints so that intermediate solutions remain within physically meaningful topology spaces. Through this domain-driven customization, the inference process remains computationally tractable for utility-scale systems while maintaining practical accuracy. In this framework, data determine which assignments are likely, while constraints enforce which assignments are physically feasible \cite{li2021physical, mortlock2024adaptive}.


As a secondary contribution, the framework introduces \emph{connection reliability estimation} to support deployment and operational decision-making. Instead of treating all inferred connections equally, the solver evaluates how strongly each assignment outperforms alternative feasible connections, thereby exposing uncertainty rather than obscuring it. Utilities can then prioritize field verification only where ambiguity remains, preserving system-wide observability while directing engineering resources efficiently \cite{weng2016distributed, ma2023hd}. This reliability-driven workflow bridges algorithmic inference and operational engineering practice, enabling scalable automation without sacrificing engineering accountability.

The proposed framework is validated using operational data from three feeders in our industrial partner's service territory, comprising more than $8{,}000$ AMI meters under heterogeneous measurement and metadata quality. The proposed approach achieves over 95\% topology reconstruction accuracy compared with baseline methods while significantly reducing computational effort relative to global inference approaches, demonstrating feasibility for practical deployment. Beyond accuracy improvements, results show that correlation-based methods alone produce ambiguous assignments in dense feeders, whereas jointly leveraging electrical signals, spatial feasibility, and operational constraints enables robust topology recovery at unprecedented deployment scale \cite{wang2017efficient, wang2024joint}. Numerical studies further confirm that solver runtime satisfies operational refresh intervals required in practical feeder management.

The remainder of this paper is organized as follows. Section II introduces the utility data ecosystem and study feeders and formulates the constrained topology inference problem. Section III presents the constraint-guided scalable inference solver and analyzes computational complexity. Section IV reports numerical validation results on operational feeders. Section V concludes the paper.

\vspace{-5mm}
\section{System Description, Data Overview, and Constrained Inference Formulation}
\label{sec:system}

We consider the low-voltage portion of a distribution feeder where each node (such as AMI meters) is connected to exactly one secondary transformer.\footnote{This paper focuses on secondary node--transformer connectivity and does not attempt to reconstruct medium-voltage switching topology.}
Accurate node--transformer connectivity underpins outage localization, voltage analytics, phase balancing, and DER integration. In practice, however, utilities maintain connectivity and phase information across multiple enterprise systems (e.g., GIS, AMI, OMS, billing databases), and the recorded topology evolves continuously due to construction, emergency restoration, customer service changes, and asynchronous database updates. Thus, record-based connectivity is typically \emph{informative but imperfect}.


\vspace{-4mm}
\subsection{Utility Context, Data Ecosystem, and Study Feeders}
\label{subsec:context}

Our operational utility partner maintains operational data across multiple enterprise systems, including Geographic Information Systems (GIS), Advanced Metering Infrastructure (AMI), Outage Management Systems (OMS), and customer/billing databases. Connectivity and phase information stored in GIS serve as the primary reference for downstream applications, but continuous field operations and imperfect synchronization introduce inconsistencies between recorded connectivity and physical field connections. Figure~\ref{fig:feeders} illustrates the three feeders from a large U.S. distribution utility for evaluation. These feeders span substantially different spatial densities and loading patterns, enabling assessment across both geographically compact rural feeders and dense urban secondary networks. 

\begin{figure}[htbp]
    \centering
    \begin{subfigure}[b]{0.15\textwidth}
        \centering
        \includegraphics[width=\textwidth]{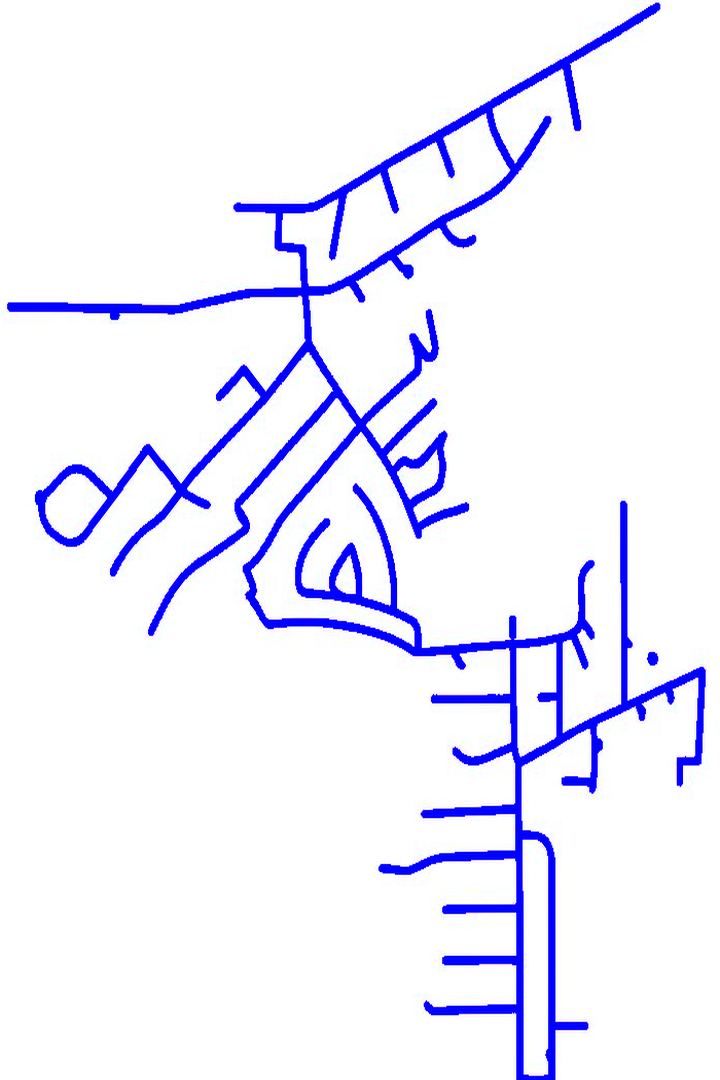}
        \caption{Feeder 1}
        \label{fig:feeder_1}
    \end{subfigure}
    \hfill
    \begin{subfigure}[b]{0.15\textwidth}
        \centering
        \includegraphics[width=\textwidth]{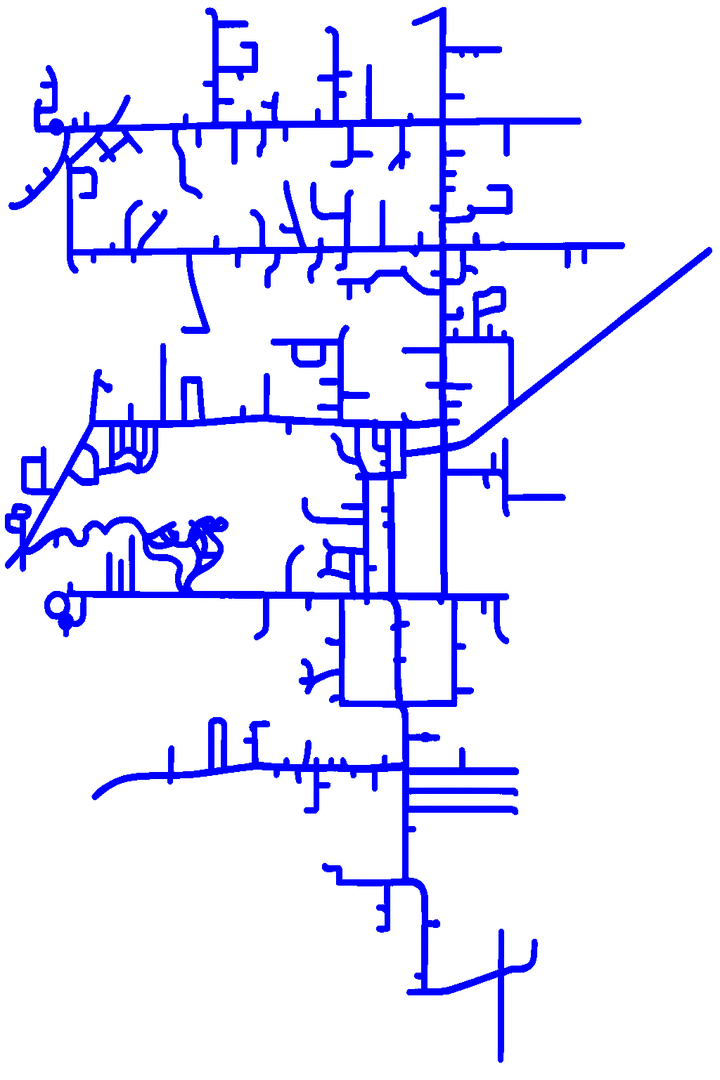}
        \caption{Feeder 2}
        \label{fig:feeder_2}
    \end{subfigure}
    \hfill
    \begin{subfigure}[b]{0.15\textwidth}
        \centering
        \includegraphics[width=\textwidth]{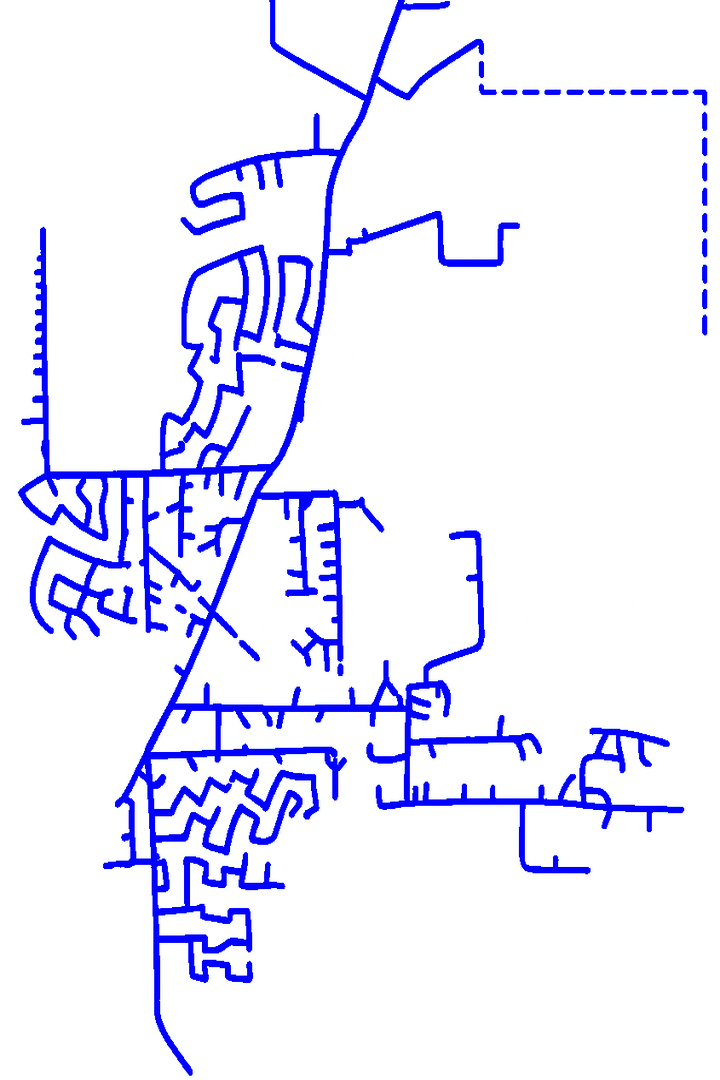}
        \caption{Feeder 3}
        \label{fig:feeder_3}
    \end{subfigure}
    \caption{Feeders in the U.S. provided by our utility partner.}
    \label{fig:feeders}
    \vspace{-3mm}
\end{figure}

\vspace{-1mm}
\subsection{Data Sources and Roles in Inference}
\label{subsec:data_sources}

The inference framework integrates three categories of utility data:
(i) spatial and asset metadata maintained in GIS and record systems,
(ii) AMI voltage and load measurements, and
(iii) outage event logs.
Importantly, none of these sources alone provides fully reliable connectivity information. Instead, each contributes partial evidence whose consistency must be evaluated jointly under operational constraints. Therefore, the objective is not to ``clean'' one dataset in isolation, but to infer connectivity by cross-validating heterogeneous signals while restricting solutions to be physically feasible. Table~\ref{tab:data_sources} summarizes data sources and roles in constrained inference.

\begin{table}[t]
\centering
\caption{Data roles in constrained topology inference.}
\label{tab:data_sources}
\small
\begin{tabular}{p{0.16\columnwidth} p{0.30\columnwidth} p{0.40\columnwidth}}
\toprule
\textbf{Source} & \textbf{Key fields} & \textbf{Role in inference} \\
\midrule
GIS records & transformer IDs, node IDs, base mapping, phases, coordinates, nominal voltages & provides base graph $\mathcal{G}_B$; spatial feasibility and candidate neighborhoods; initial labels with uncertainty \\
AMI & $V_{it}$, $P_{it}$, $Q_{it}$ (15-min) & provides electrical evidence for coupling/consistency; supports local discrimination within candidate sets \\
OMS outage logs & outage/restoration times per node & provides event-driven cross-validation signal for connectivity consistency (optional) \\
\bottomrule
\end{tabular}
\end{table}

\vspace{-3mm}
\subsection{Problem Formulation}
\label{subsec:notation}

Let $\mathcal{V}$ denote the set of nodes (AMI meters) and $\mathcal{T}$ the set of service transformers. The utility provides a base topology and phase record
$\mathcal{G}_{B}=\{\mathcal{V},\mathcal{E}_B,\Phi_B\}$,
where $\mathcal{E}_B$ encodes the base node--transformer mapping and $\Phi_B$ contains phase labels. Because field connectivity evolves over time, $\mathcal{G}_B$ is treated as a noisy prior rather than ground truth. Table~\ref{tab:io_notation} summarizes key inputs and the desired output of the inference process.
\begin{table}[t]
\centering
\caption{Inputs/outputs used in constrained topology inference.}
\label{tab:io_notation}
\small
\begin{tabular}{p{0.22\columnwidth} p{0.70\columnwidth}}
\toprule
\textbf{Symbol} & \textbf{Meaning} \\
\midrule
$(\phi_i,\lambda_i)$ & latitude/longitude for node $i\in\mathcal{V}$ \\
$(\phi_j,\lambda_j)$ & latitude/longitude for transformer $j\in\mathcal{T}$ \\
$\{V^A_{it}\}_t$ & phase-A voltage magnitude time series at node $i$ (similarly for B,C) \\
$\{P_{it}\}_t,\{Q_{it}\}_t$ & active/reactive power time series at node $i$ \\
$\overline{S}_j$ & transformer rating (kVA) for transformer $j$ \\
$\mathcal{C}$ & physical/operational constraint set (capacity, voltage range, phase feasibility, etc.) \\
$\mathcal{G}_B$ & base topology/phase record from utility systems (noisy prior) \\
\midrule
$\hat{\mathcal{G}}=\{\mathcal{V},\hat{\mathcal{E}},\hat{\Phi}\}$ & inferred topology and phase assignment \\
\bottomrule
\end{tabular}
\end{table}
Connectivity is modeled as a bipartite mapping where each node connects to exactly one transformer.
A candidate topology is represented as
$\mathcal{G}=\{\mathcal{V},\mathcal{E},\Phi\}$,
where $\mathcal{E}\subseteq\mathcal{V}\times\mathcal{T}$ is the set of node--transformer edges and $\Phi$ denotes phase assignments.

For each node $i\in\mathcal{V}$, the available heterogeneous observations include:
\textit{Geographical data:} coordinates $(\phi_i,\lambda_i)$;
\textit{Electrical measurements:} voltage time series $\{V^{A}_{it}\}_t$ (and similarly for phases B, C when available);
\textit{Load measurements:} active/reactive power $\{P_{it}\}_t,\{Q_{it}\}_t$;
\textit{Text data:} address string $S_i$;
\textit{Nominal information:} nominal voltage labels when available.
Each transformer $j\in\mathcal{T}$ has rated capacity $\overline{S}_j$.


Let $\mathcal{C}$ denote the operational constraints restricting feasible connectivity. Typical constraints include:
\textit{Voltage feasibility:}
$V^{A}_{it} \in [V_{\mathrm{nom,min}}^{A}, V_{\mathrm{nom,max}}^{A}], \forall i\in\mathcal{V}, \forall t$,
and similarly for phases B and C.
\textit{Transformer capacity feasibility:}
$\sum_{i \in \mathcal{N}(j)} \left(P_{it}+\mathbf{j}Q_{it}\right) \le \overline{S}_{j}, \forall j\in\mathcal{T}, \forall t$,
where $\mathcal{N}(j)=\{i\in\mathcal{V}:(i,j)\in\mathcal{E}\}$.
These constraints define allowable solutions rather than merely serving as post-processing checks.

\textbf{Constrained Inference Objective for Topology Refinement}.
Let $a_i\in\mathcal{T}$ denote the transformer assignment of node $i$, and let $\mathbf{a}=\{a_i\}_{i\in\mathcal{V}}$ collect all assignments (equivalently, $\mathcal{E}$).
Topology refinement is formulated as
\begin{equation}
\label{eq:global_objective}
\min_{\mathbf{a}}
J_{\mathrm{elec}}(\mathbf{a})
+\alpha J_{\mathrm{geo}}(\mathbf{a})
+\beta J_{\mathrm{prior}}(\mathbf{a})
\quad
\text{s.t.}\;
\mathbf{a}\in\Omega(\mathcal{C}),
\end{equation}
where $\Omega(\mathcal{C})$ is the feasible set induced by constraints $\mathcal{C}$ and $\alpha,\beta>0$ trade off spatial plausibility and conservativeness relative to the base record model.

\textbf{Interpretation of the terms (power-language).}
$J_{\mathrm{elec}}$ penalizes assignments that create electrically incoherent transformer groups (e.g., a node whose voltage trajectory is inconsistent with its assigned group under typical operating conditions).
$J_{\mathrm{geo}}$ penalizes geographically implausible assignments given the physical layout of the secondary network.
$J_{\mathrm{prior}}$ penalizes unnecessary deviation from the base record model, reflecting that utilities typically have mostly-correct connectivity and only a small subset of edges are wrong.
The solver outputs a refined topology estimate
$
\hat{\mathcal{G}}=\{\mathcal{V},\hat{\mathcal{E}},\hat{\Phi}\},
$
where $\hat{\mathcal{E}}$ corresponds to an assignment $\hat{\mathbf{a}}$ that (approximately) minimizes \eqref{eq:global_objective} while satisfying feasibility constraints.

\textbf{Localized Optimization for Scalability}. Direct solution of \eqref{eq:global_objective} is combinatorial at utility scale. However, in the deployment setting addressed here, record errors are typically \emph{localized}: most assignments in $\mathcal{G}_B$ remain correct, and inconsistencies concentrate in small regions affected by historical record issues. Therefore, scalable inference proceeds by:
(1) detecting a suspicious node set $\mathcal{O}\subset\mathcal{V}$ whose current assignments contribute disproportionately to $J_{\mathrm{elec}}+\alpha J_{\mathrm{geo}}$,
(2) restricting reassignment candidates for each $i\in\mathcal{O}$ to a local transformer neighborhood (e.g., $K$ nearest transformers),
(3) accepting only updates that reduce the objective while keeping constraints satisfied.
This connects the implemented solver (Section~\ref{sec:solver}) to the constrained optimization: the method is a localized approximation to \eqref{eq:global_objective} that preserves the structure while achieving utility-scale tractability.

\vspace{-3mm}

\section{Utility Data Properties and Failure Modes}
\label{sec:data_challenges}

This section characterizes the practical challenges observed in utility data and explains why purely correlation-based or purely spatial approaches often fail. These observations motivate the constrained inference framework developed later.

\vspace{-3mm}
\subsection{Spatial Record Inconsistencies}
\label{subsec:spatial_issue}

Connectivity records in utilities are typically maintained through GIS and billing systems, where node locations are recorded through manual entry or legacy migration processes. Over time, updates from field activities, customer relocations, or synchronization delays across databases lead to inconsistencies between recorded locations and actual physical connections. Figure~\ref{fig:geocoding} illustrates a representative example from industrial feeders. Several nodes appear to be connected to transformers that are not geographically closest. Such mismatches arise from incorrect or outdated coordinate records, not necessarily from actual wiring errors.

\begin{figure}[h!]
\centering
\includegraphics[width=\columnwidth]{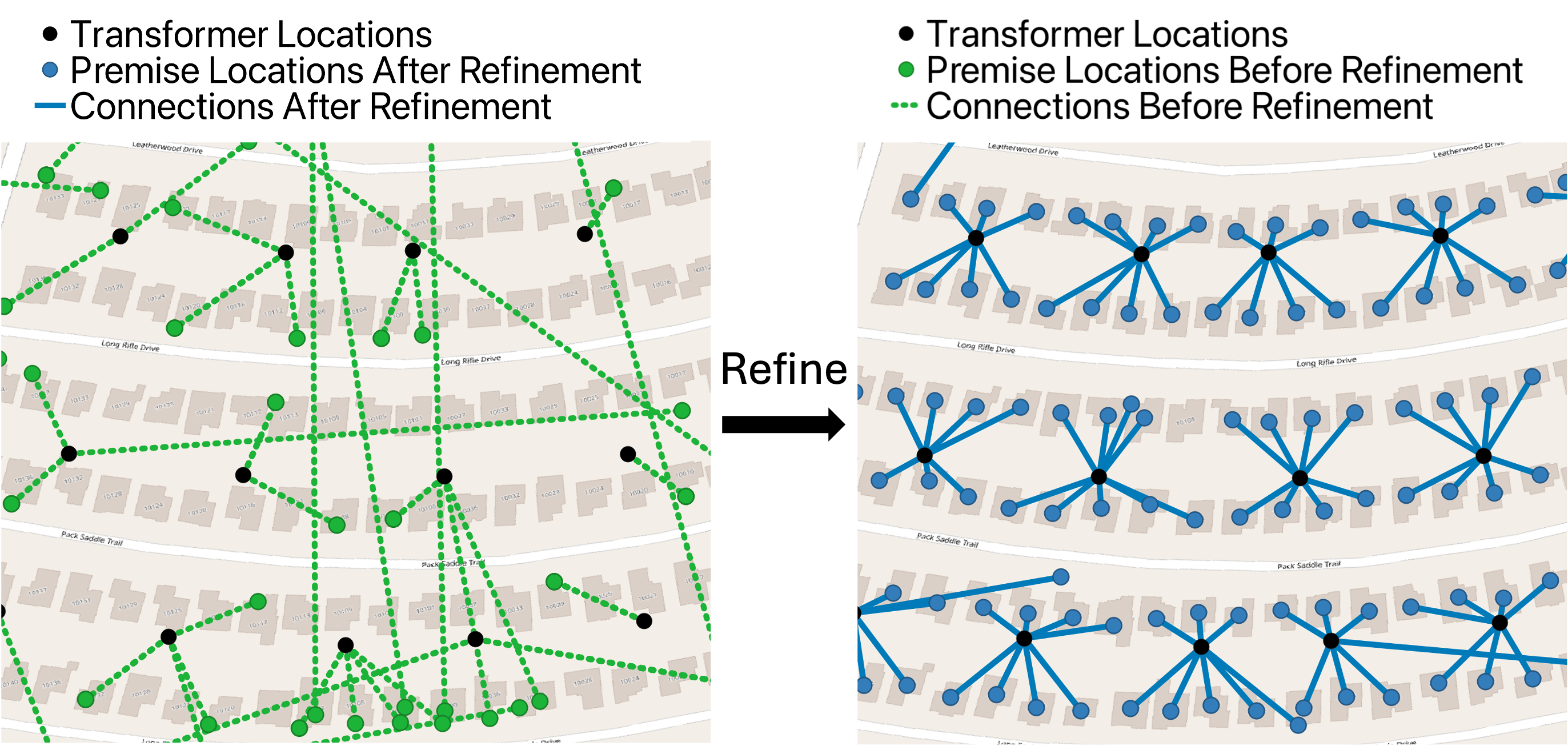}
\caption{Before and after geocoding-based spatial refinement. Nodes incorrectly associated with distant transformers are corrected after address-based coordinate refinement.}
\label{fig:geocoding}
\vspace{-3mm}
\end{figure}

To mitigate these issues, address-based geocoding is used to refine node coordinates and detect cases where spatial assignments contradict physical feasibility. However, spatial proximity alone cannot uniquely determine connectivity, particularly in dense urban feeders where multiple transformers serve geographically overlapping areas. Thus, spatial information is informative but insufficient on its own.

\vspace{-3mm}
\subsection{Measurement Quality and Data Completeness}
\label{subsec:data_quality}

Electrical measurements collected from AMI meters exhibit additional challenges, including missing nominal voltage labels, incomplete time-series data, and physically unrealistic readings caused by sensor faults or communication issues. Figure~\ref{fig:data_filtering} shows voltage profiles before and after cleaning. Some meters report persistently abnormal values, such as unrealistically low voltages or flatlined signals, which degrade correlation analysis if not filtered.

\begin{figure}[h!]
\centering
\includegraphics[width=\columnwidth]{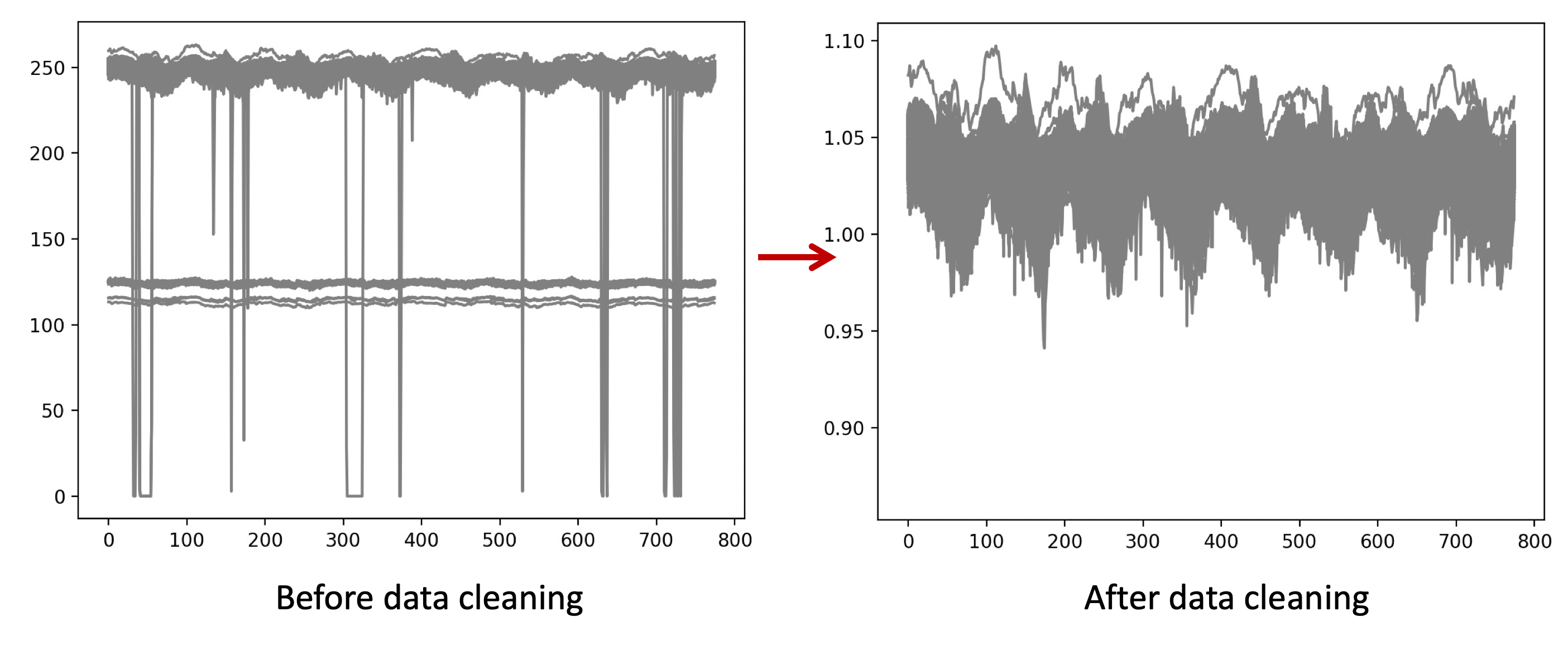}
\caption{Voltage measurements before and after data cleaning. Unrealistic or corrupted measurements must be filtered to preserve electrical consistency analysis.}
\label{fig:data_filtering}
\vspace{-4mm}
\end{figure}

Standard preprocessing steps such as voltage normalization, missing-data filtering, and interquartile-range-based outlier removal are therefore required before electrical similarity metrics can be reliably computed. However, aggressive data removal may reduce system observability. Consequently, preprocessing must balance data cleaning with preserving sufficient coverage for inference.

\vspace{-3mm}
\subsection{Correlation Ambiguity in Dense Feeders}
\label{subsec:correlation_issue}

Many existing topology identification methods rely primarily on voltage correlation among nodes. While correlation is effective in sparsely connected feeders, it becomes unreliable in dense urban environments where multiple nearby transformers experience similar voltage fluctuations. Figure~\ref{fig:flowchart} summarizes the inference pipeline typically used in correlation-based methods. Voltage normalization and correlation computation are followed by clustering and reassignment steps. However, when correlation patterns overlap across nearby transformer groups, clustering alone cannot reliably separate correct assignments. As a result, nodes may be incorrectly grouped despite exhibiting high correlation with multiple candidate transformers. This phenomenon explains why purely signal-based inference often produces ambiguous or unstable assignments in practice.

\begin{figure}[h!]
\centering
\includegraphics[width=\linewidth]{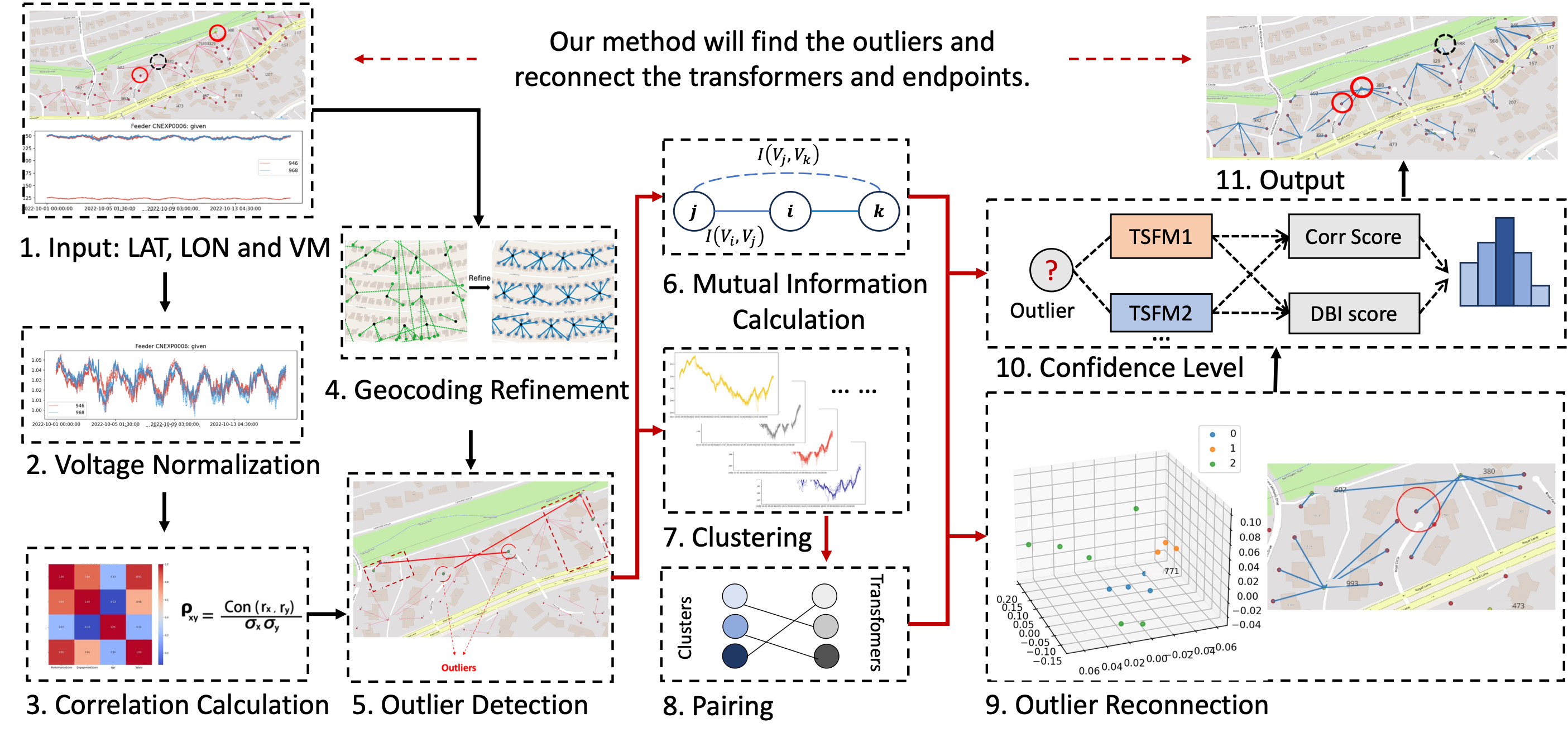}
\caption{Workflow of correlation-based topology identification. Pure correlation analysis struggles in dense feeders.}
\label{fig:flowchart}
\vspace{-3mm}
\end{figure}

\vspace{-3mm}
\subsection{Localized Nature of Topology Errors}
\label{subsec:localized_errors}

Importantly, utility record inaccuracies rarely corrupt entire feeders. Instead, most connectivity remains correct, with errors localized to small subsets of nodes affected by historical record inconsistencies. Figure~\ref{fig:Frameworkorigin} illustrates a practical refinement framework where only suspicious assignments are corrected while preserving reliable portions of the base topology.

\begin{figure}[h!]
\centering
\includegraphics[width=1.08\linewidth]{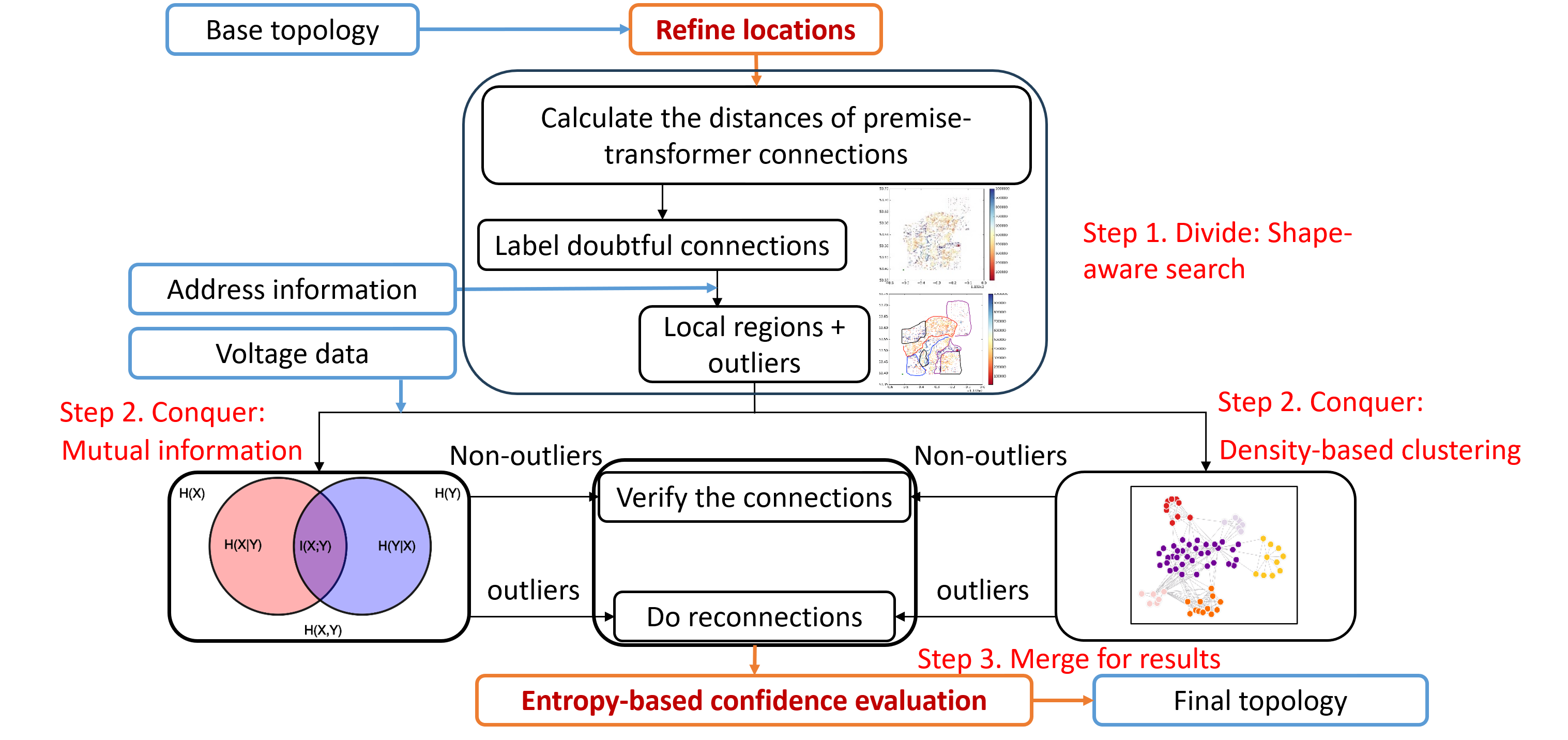}
\caption{Topology refinement framework. Instead of rebuilding topology globally, localized suspicious assignments are detected and corrected within constrained neighborhoods.}
\label{fig:Frameworkorigin}
\vspace{-3mm}
\end{figure}

This observation motivates a divide-and-conquer strategy: detect inconsistent nodes first, then perform localized reassignment rather than global topology reconstruction. Such locality is critical for scalability across feeders containing thousands of nodes.

\vspace{-2mm}
\subsection{Need for Reliability Quantification}
\label{subsec:confidence_need}

Even after refinement, some assignments remain ambiguous due to insufficient distinguishing signals. Utilities therefore require not only an inferred topology but also an estimate of connection reliability to prioritize field verification.

Figure~\ref{fig:confidence} illustrates the concept of evaluating assignments by comparing clustering and correlation quality against falsified alternatives. Assignments that remain superior under falsification tests receive higher confidence. Reliability scoring allows utilities to focus manual verification resources on low-confidence regions while safely deploying high-confidence results.

\begin{figure}[h]
\centering
\includegraphics[width=1.0\linewidth]{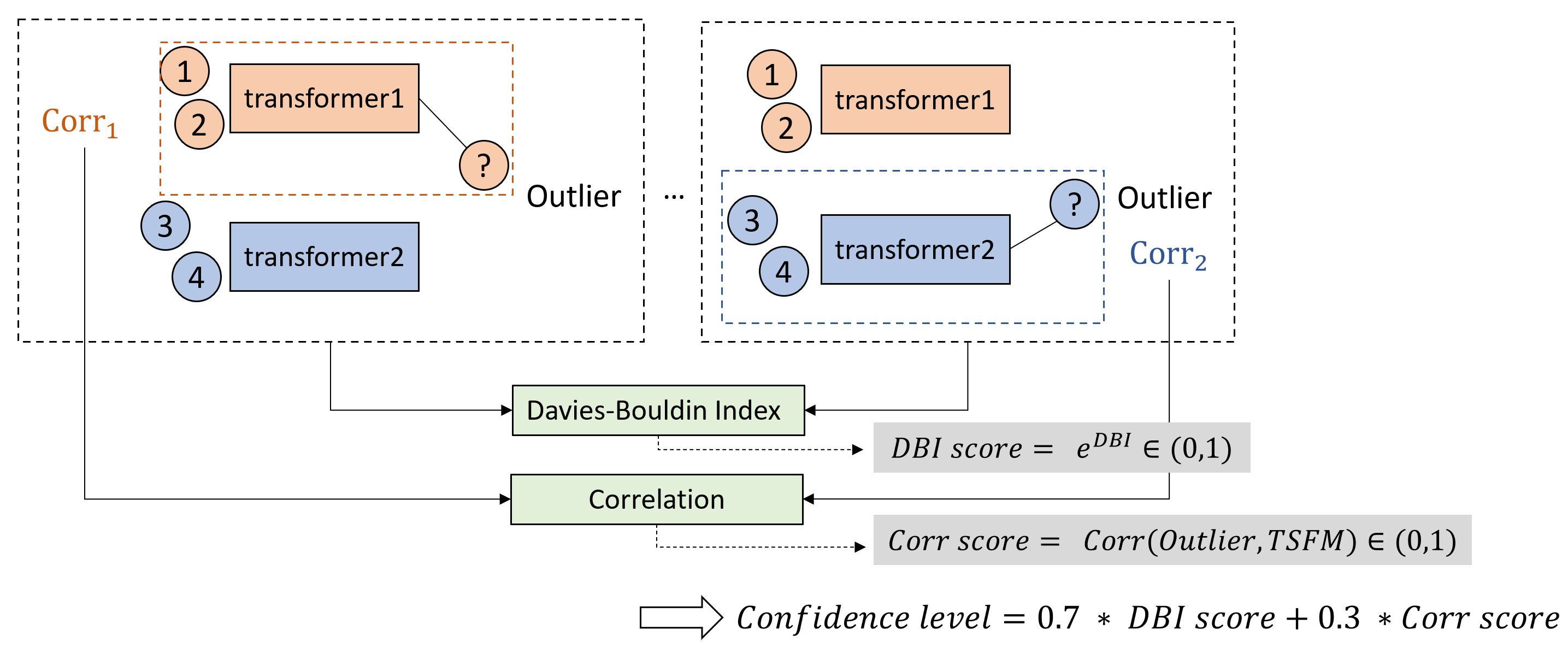}
\caption{Illustration of confidence evaluation via falsification and clustering comparison. Assignments that remain superior to alternatives receive higher confidence.}
\label{fig:confidence}
\vspace{-3mm}
\end{figure}

\vspace{-4mm}
\subsection{Implications for Method Design}

The above observations reveal that practical topology inference must satisfy several requirements simultaneously:

1) Multiple heterogeneous signals must be combined rather than relying on correlation alone.

2) Only localized suspicious assignments should be reconsidered to ensure scalability.

3) Each inferred connection should include a reliability measure to support operational deployment.

However, translating these operational observations into scalable algorithmic design requires identifying the core properties that inference methods must satisfy at utility scale. The following section formalizes these design requirements before presenting the solver implementation.

\section{Design Requirements for Scalable Topology Inference}
\label{sec:design_requirements}

Section~\ref{sec:data_challenges} shows that topology errors arise from inconsistencies among heterogeneous data sources rather than absence of measurements. Consequently, inference algorithms must operate under imperfect data while remaining computationally tractable for feeders containing thousands of nodes. This section formalizes algorithmic requirements necessary for scalable deployment and motivates the solver construction presented next.

\textbf{Requirement 1: Scalable Inference Under Large Systems}. Let $\mathcal{V}$ denote the set of nodes and $\mathcal{T}$ denote the set of transformers. In principle, topology inference seeks assignments
$
f:\mathcal{V}\rightarrow\mathcal{T}
$
mapping each node to a transformer. A global search across all assignments has complexity growing exponentially with $|\mathcal{V}|$, rendering exhaustive inference infeasible for large feeders.

Therefore, scalable inference requires restricting assignment search to candidate subsets
$
f_i:\mathcal{V}_i \rightarrow \mathcal{T}_i,
$
where $|\mathcal{T}_i| \ll |\mathcal{T}|$ for each suspicious node subset $\mathcal{V}_i$. This converts global inference complexity from combinatorial growth to approximately linear scaling in feeder size.

\textbf{Requirement 2: Localized Correction Rather Than Global Reconstruction}. Utility observations indicate that most connections remain correct, with errors localized. Let $\mathcal{E}_B$ denote base topology edges obtained from utility records. Only a small subset $\Delta\mathcal{E}$ contains incorrect assignments:
$
|\Delta\mathcal{E}| \ll |\mathcal{E}_B|.
$ Inference should therefore update only suspicious assignments,
$
\hat{\mathcal{E}} = \mathcal{E}_B \setminus \Delta\mathcal{E}
\cup \Delta\hat{\mathcal{E}},
$
rather than reconstructing $\hat{\mathcal{E}}$ globally. Restricting inference to localized corrections preserves scalability and avoids unnecessary perturbation of reliable connectivity.

\textbf{Requirement 3: Constraint Compatibility During Inference}. Recovered assignments must satisfy operational constraints $\mathcal{C}$, such as transformer capacity and voltage feasibility. Let
$
\mathcal{C}(f)=1
$
indicate feasibility of assignment $f$. Inference must satisfy
$
f^\star = \arg\max_{f} \; \text{Score}(f)
\quad \text{s.t.} \quad \mathcal{C}(f)=1,
$
ensuring physically realizable topology solutions. Constraint compatibility prevents inference from producing statistically plausible but operationally infeasible results.

\textbf{Requirement 4: Reliability Quantification}. Multiple assignments may satisfy physical constraints under limited signal distinguishability. Let $\mathcal{F}_i$ denote feasible assignments for node $i$. Reliability should quantify assignment robustness:
$
R_i = \frac{\text{Score}(f_i^\star)}
{\max_{f \in \mathcal{F}_i \setminus f_i^\star}
\text{Score}(f)},
$
measuring the separation between chosen and competing assignments. Reliability estimation enables utilities to prioritize verification where inference ambiguity remains high.

\textbf{Design Implications}. The above requirements imply that scalable topology inference solvers must: restrict inference to localized candidate sets, preserve reliable base topology portions, enforce physical feasibility constraints, and quantify assignment reliability.


\vspace{-3mm}
\section{Constraint-Guided Topology Inference Solver}
\label{sec:solver}

This section presents the topology inference solver designed explicitly to satisfy those requirements while realizing the previous constrained inference formulation. A key point is that the solver does not introduce new machine learning models; instead, it customizes existing data-driven inference mechanisms so that they become compatible with power-system operational constraints and scalable to utility deployment. 

\vspace{-3mm}
\subsection{Localized Detection of Inconsistent Assignments}

To preserve scalability, inference begins by identifying only those nodes whose assignments contradict spatial or electrical consistency, thereby preserving reliable portions of the base topology $\mathcal{G}_B$. Let node $i$ be assigned to transformer $j$. Using geocoded coordinates, the physical distance is computed:
\begin{equation}
d_{i,j} = \text{GeoDist}((\phi_i,\lambda_i),(\phi_j,\lambda_j)),
\label{eq:geodist_solver}
\end{equation}
and compared with the closest transformer distance
$
d_{i,j^*} = \min_k \text{GeoDist}((\phi_i,\lambda_i),(\phi_k,\lambda_k)).
$
The distance ratio
$
r_i = \frac{d_{i,j}}{d_{i,j^*}}
$
flags spatially implausible assignments when $r_i>\tau$.

Electrical inconsistency is detected using voltage similarity. For node $i$ connected to transformer $j$ with node set $\mathcal{N}(j)$,
$
\rho_{i,k} = \text{Corr}(V_i(t), V_k(t)), \quad k\in\mathcal{N}(j),
$
and assignments exhibiting persistently weak correlation with peers are marked suspicious. Importantly, only inconsistent nodes are reconsidered. This localized correction strategy reflects a domain insight: connectivity errors typically occur in small regions rather than globally. By customizing inference to exploit this property, the algorithm achieves scalability without sacrificing inference quality.

\vspace{-3mm}
\subsection{Localized Candidate Transformer Search}

A major scalability bottleneck in topology inference arises from global search over all candidate transformers. To avoid this, reconnection search is restricted to nearby transformers, leveraging the physical locality of distribution networks. For each suspicious node $i$, candidate transformers are selected as
$
\mathcal{N}_T(i) = \text{NearestTransformers}(i,K),
$
where $K$ geographically closest transformers are considered. Nodes already served by these transformers form candidate dataset
$
\mathcal{D}_i = \bigcup_{k \in \mathcal{N}_T(i)} \mathcal{N}(k).
$

Restricting inference to local neighborhoods ensures computational complexity scales approximately linearly with feeder size, satisfying deployment requirements for large utility systems. This step illustrates how standard clustering tools must be customized to respect feeder geography in order to remain scalable in practice.

\vspace{-3mm}
\subsection{Data-Driven Reconnection Under Physical Constraints}
\label{subsec:reconnection_solver}

To achieve compatibility between data-driven inference and grid operation constraints, reconnection combines clustering-based grouping with feasibility enforcement. Candidate nodes are embedded into a joint spatial–electrical feature space
$
\mathbf{x}_j = [V^A_j(0),\ldots,V^A_j(t), \phi_j, \lambda_j].
$ K-means clustering groups electrically and spatially consistent nodes. Each cluster is then assigned to the transformer most frequently serving its members:
$
T^\star(\mathcal{S}_\ell) =
\arg\max_{k \in \mathcal{N}_T(i)}
|\mathcal{S}_\ell \cap \mathcal{N}(k)|.
$

The outlier node is reconnected according to the transformer serving its cluster. Alternatively, dependency can be evaluated through mutual information:
\begin{equation}
\bar{\text{MI}}(i,k)
= \frac{1}{|\mathcal{N}(k)|}
\sum_{k' \in \mathcal{N}(k)}
\text{MI}(V_i(t), V_{k'}(t)),
\end{equation}
selecting the transformer maximizing statistical dependency. Crucially, assignments are accepted only if resulting configurations satisfy operational feasibility constraints introduced earlier. This coupling of machine learning similarity measures with physical feasibility is a key customization enabling deployment in power systems.

\vspace{-3mm}
\subsection{Reliability Estimation via Falsification Testing}
\label{subsec:confidence_solver}

Beyond inference accuracy, deployment requires knowing which assignments remain uncertain. Reliability is therefore evaluated via falsification testing. Alternative candidate transformers are tested, and clustering quality is compared via Davies–Bouldin Index:
$
\text{Score}_{\text{DBI}}
= \sigma\!\left(\log
\frac{\text{DBI}_{\text{false}}}
{\text{DBI}_{\text{true}}}\right),
$
where $\sigma(\cdot)$ is a sigmoid mapping.
Correlation-based support is similarly evaluated:
$
\text{Score}_{\text{corr}}
= \sigma\!\left(
\frac{\text{Corr}_{\text{ours}}}
{\text{Corr}_{\text{alt}}}
\right).
$
The final confidence score becomes
$
\text{CL} = 0.7\,\text{Score}_{\text{DBI}}
+ 0.3\,\text{Score}_{\text{corr}}.
$ This step converts inference output into actionable operational information, allowing utilities to prioritize verification only where ambiguity remains. Reliability estimation thus forms a critical bridge between automated inference and engineering deployment.

\vspace{-3mm}
\subsection{Physics-Based Validation and Iterative Refinement}
To ensure operational feasibility, inferred assignments must satisfy transformer loading and voltage constraints.
For transformer $j$,
$
S^{agg}_{jt} =
\sum_{i\in\mathcal{N}(j)}
\sqrt{P_{it}^2 + Q_{it}^2}
$
must satisfy
$
S^{agg}_{jt} \le \overline{S}_j.
$
Voltage compatibility is similarly enforced:
$
V_{it} \in
[V_{\text{nom,min}}, V_{\text{nom,max}}].
$
Assignments violating constraints trigger localized reassignment until feasibility is restored. Because correction remains local, scalability is preserved even during iterative refinement.

\textbf{Overall Solver Workflow}. The workflow summarized in Fig.~\ref{fig:Frameworkorigin} shows how inconsistent assignments are detected, corrected locally, evaluated for reliability, and validated through physical constraints to produce final topology estimates. The resulting solver simultaneously satisfies scalability, locality, feasibility, and reliability requirements. By customizing data-driven inference to respect power-system operational structure, the method becomes deployable across large utility feeders while maintaining physically consistent results.

\vspace{-3mm}
\subsection{Computational Complexity and Scalability}
\label{subsec:complexity}

A central deployment requirement identified in Section~\ref{sec:design_requirements} is that topology inference must scale to feeders containing thousands of nodes without requiring global combinatorial search. Direct reassignment of all nodes would require evaluating all node–transformer mappings, resulting in exponential complexity and making global optimization infeasible for utility-scale systems.

The proposed solver instead performs localized refinement. Let $N$ denote the number of nodes (such as residential loads), $M$ the number of transformers, and let $K$ denote the number of geographically nearby candidate transformers considered for reassignment, where typically $K \ll M$ due to feeder locality. Let $\mathcal{O}$ denote the set of suspicious nodes detected for reassignment. Empirically, topology errors are localized and therefore $|\mathcal{O}| \ll N$, although worst-case scaling remains proportional to $N$. Each reassignment step evaluates electrical similarity and feasibility only within the local candidate set.

\begin{lemma}[Solver Scalability]
Assume localized topology errors such that $|\mathcal{O}| = \gamma N$ for some small constant $\gamma \ll 1$, and that the candidate transformer set size $K$ is bounded by feeder locality. Then the computational complexity of each solver iteration scales as
$
O(|\mathcal{O}| K),
$
which becomes approximately linear in feeder size, i.e.,
$
O(NK).
$
Since $K$ remains small and independent of $N$ in practical feeders, solver complexity scales approximately linearly with system size.
\end{lemma}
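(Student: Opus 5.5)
We argue by cost accounting over one solver iteration, splitting it into the stages of Section~\ref{sec:solver}: (i) localized detection of inconsistent assignments, (ii) localized candidate transformer search, (iii) data-driven reconnection under physical constraints, and (iv) physics-based validation. Throughout we treat the time-series length $L$ of the AMI voltage and power records, the number of K-means clusters and their inner iterations, and the nearest-neighbour query cost as constants fixed by the deployment. We also use feeder locality: every transformer serves a bounded number $c$ of nodes, and $K\ll M$ is independent of $N$. We then bound each stage by $O(|\mathcal{O}|K)$ plus terms that touch only the suspicious nodes' neighbourhoods.

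\textbf{Step 1: per-node work.} Fix a suspicious node $i\in\mathcal{O}$. Retrieving $\mathcal{N}_T(i)$ costs $O(K)$ after a spatial index (e.g.\ a k-d tree over the fixed transformer coordinates) is built once; that preprocessing is amortized and excluded from the per-iteration count. The local dataset satisfies $|\mathcal{D}_i|\le cK$. Clustering $\mathcal{D}_i\cup\{i\}$ with a bounded number of K-means iterations costs $O(cKL)=O(K)$. The mutual-information alternative computes $\bar{\mathrm{MI}}(i,k)$ for $K$ transformers over at most $c$ peers each, which is again $O(K)$. The falsification score of Section~\ref{subsec:confidence_solver} compares at most $K$ alternatives, each with a DBI evaluation on $O(cK)$ points. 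Under a bounded cluster count this is $O(K)$ per alternative; for this step we therefore either restrict to a constant number of falsified alternatives, or note that DBI can be updated incrementally.

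\textbf{Step 2: feasibility checks.} Reassigning $i$ changes only two transformer aggregates, $S^{agg}_{jt}$ and $S^{agg}_{a_i t}$. Updating them and testing $S^{agg}_{jt}\le\overline{S}_j$ over $L$ time steps costs $O(L)=O(1)$, and the voltage-range check for $i$ is also $O(1)$. Summing Steps 1 and 2 over $\mathcal{O}$ gives $O(|\mathcal{O}|K)$.

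\textbf{Step 3: detection.} Computing $r_i$ and the peer correlations $\rho_{i,k}$ for all $N$ nodes costs $O(N(c+1))$, which is linear but not bounded by $O(|\mathcal{O}|K)$. This is the main obstacle, because the lemma's bound counts only reassignment work. We would handle it in one of two ways: state that the detection pass is a single $O(N)$ screening done once per iteration, which is absorbed into the final $O(NK)$ bound since $K\ge 1$; or maintain the scores incrementally, so that after the first pass only nodes adjacent to changed transformers ($O(|\mathcal{O}|c)$ of them) are rescored.

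\textbf{Conclusion.} Substituting $|\mathcal{O}|=\gamma N$ gives per-iteration cost $O(\gamma NK)+O(N)=O(NK)$. Since $K$ is bounded independently of $N$, this is linear in feeder size. The number of iterations is not part of the claim, but we would remark that each accepted update strictly decreases \eqref{eq:global_objective} over a finite assignment space, so the procedure terminates.
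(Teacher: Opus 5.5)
Your proposal is correct and follows the same route as the paper's proof: you bound the work for each suspicious node by $O(K)$ (at most $K$ candidate transformers and their served nodes), sum over $\mathcal{O}$ to get $O(|\mathcal{O}|K)$, and substitute $|\mathcal{O}|=\gamma N$ to obtain $O(NK)$, while also stating assumptions the paper leaves implicit (bounded nodes per transformer, fixed series length and cluster count) and counting the detection pass that the paper's proof omits. One small correction: under the lemma's own hypothesis $|\mathcal{O}|=\gamma N$ with $\gamma>0$ fixed, the $O(N)$ detection cost is already $O(|\mathcal{O}|)$ (with constant $1/\gamma$), so it is not an obstacle to the stated bound, only to the informal reading that cost tracks the number of corrections.
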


\begin{proof}
For each suspicious node $i \in \mathcal{O}$, reassignment evaluates similarity measures and feasibility constraints against at most $K$ candidate transformers and their connected nodes. Therefore, per-node computational effort scales as $O(K)$. Because reassignment is restricted to nodes in $\mathcal{O}$, total computation per iteration scales as $O(|\mathcal{O}|K)$. Under localized error assumptions where $|\mathcal{O}|$ grows proportionally with feeder size, total complexity becomes $O(NK)$. Since $K$ is bounded by physical locality and does not grow with feeder size, complexity becomes approximately linear in $N$.
\end{proof}

This result explains why the solver remains tractable even for feeders with thousands of nodes: inference cost grows primarily with the number of nodes requiring correction rather than with all possible assignments. 

\vspace{-3mm}
\section{Numerical Results}
\label{sec:experiment}

In this section, we first summarize datasets and evaluation setup, then demonstrate (i) failure modes of electrical-only inference, (ii) recovery via constraint-guided inference, (iii) reliability improvements for deployment, (iv) validation through physical feasibility constraints, and (v) computational scalability required for utility-scale operation.

\vspace{-3mm}
\subsection{Implementation and Evaluation Setup}

The datasets used in this study capture spatial, electrical, and temporal characteristics of low-voltage distribution feeders, including node-to-transformer mappings, geolocation information, phase labels, and AMI measurements sampled at 15-minute resolution. Table~\ref{tab:feeder_stats} summarizes feeder scales used in this study, covering both sparse and dense service regions. Figure~\ref{fig:feeders} illustrates feeder layouts.


\begin{table}[H]
\centering
\caption{Feeder Summary Statistics}
\label{tab:feeder_stats}
\begin{tabular}{c|c|c}
\toprule
\textbf{Feeder ID} & \textbf{Number of Transformers} & \textbf{Number of Nodes} \\
\midrule
Feeder 1 & 115  & 1,398 \\
Feeder 2 & 1,548 & 3,677 \\
Feeder 3 & 1,046 & 5,841 \\
\bottomrule
\end{tabular}
\vspace{-3mm}
\end{table}

\vspace{-3mm}
\subsection{Failure of Electrical-Only Inference in Dense Feeders}

Electrical correlation alone often fails in dense urban feeders where neighboring transformers experience nearly identical loading and voltage conditions, resulting in indistinguishable electrical signatures. Figure~\ref{fig:Correlation test case} shows nodes connected to different transformers exhibiting correlations above 0.95, preventing separation by correlation-only methods even though physical connectivity differs. These examples demonstrate that electrical evidence alone is insufficient for large-scale topology inference, motivating constraint-guided inference combining electrical, spatial, and physical feasibility signals.

\begin{figure}
    \centering
    \begin{subfigure}[b]{\columnwidth}
        \centering        \includegraphics[width=0.5\columnwidth]{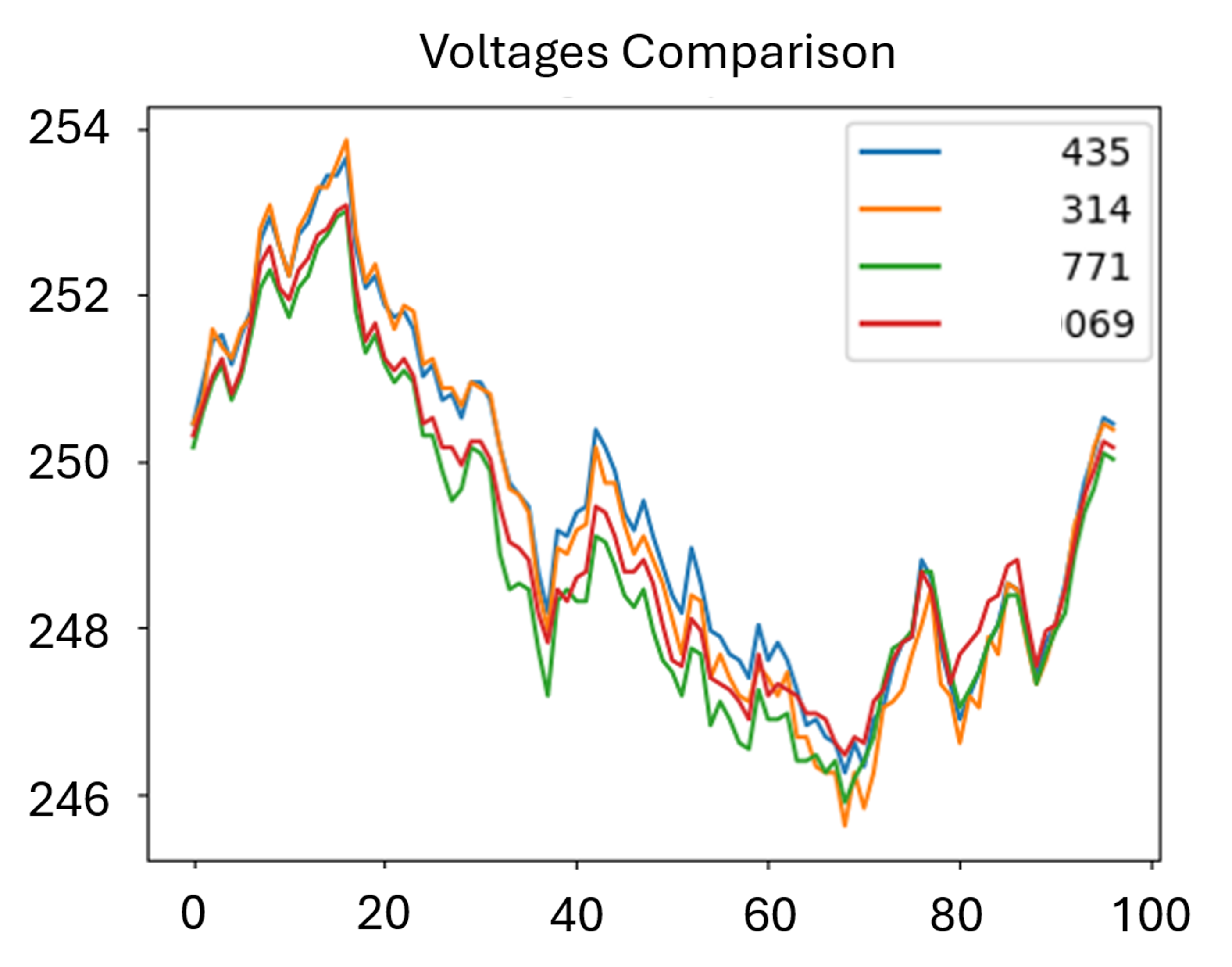}
        \caption{Voltage correlation between two transformers.} \label{fig:correlation377109415and75893988}
    \end{subfigure}
    \hfill
    \begin{subfigure}[b]{0.49\textwidth}
        \centering
        \includegraphics[width=\textwidth]{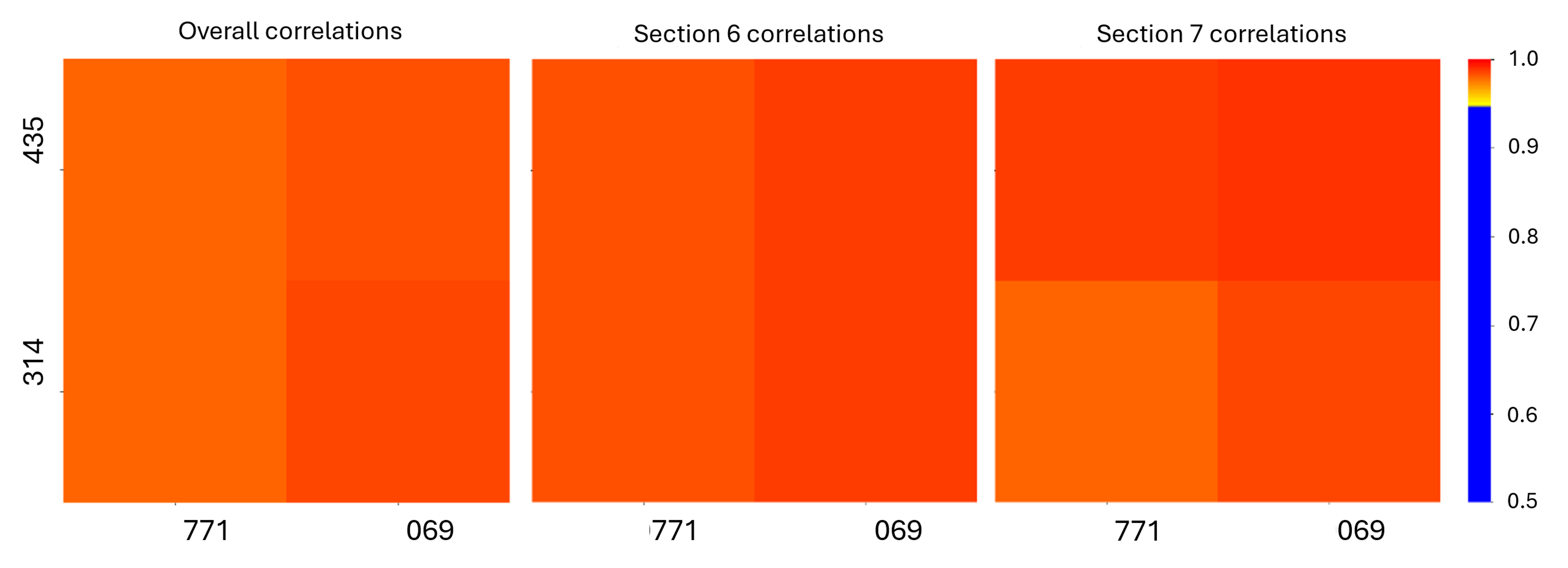}
        \caption{Correlation heatmap.}
        \label{fig:correlationheatmapall}
    \end{subfigure}
    \caption{Correlation similarity causing assignment ambiguity.}
    \label{fig:Correlation test case}
    \vspace{-4mm}
\end{figure}

\vspace{-3mm}
\subsection{Constraint-Guided Topology Recovery}

\begin{figure}[htbp]
    \centering
    \begin{subfigure}[b]{0.15\textwidth}
        \includegraphics[width=\textwidth]{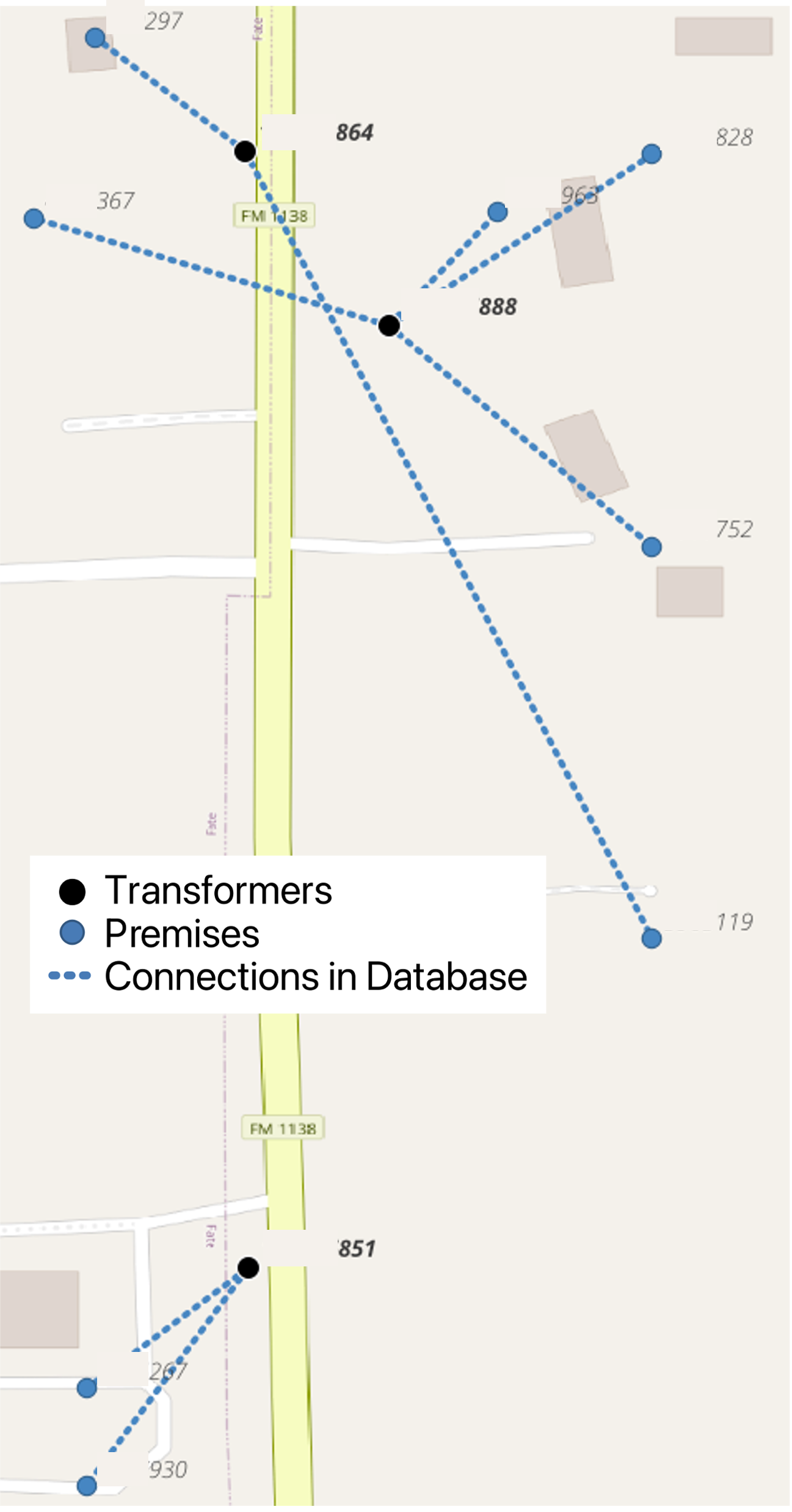}
        \caption{Database.}
    \end{subfigure}
    \hfill
    \begin{subfigure}[b]{0.15\textwidth}
        \includegraphics[width=\textwidth]{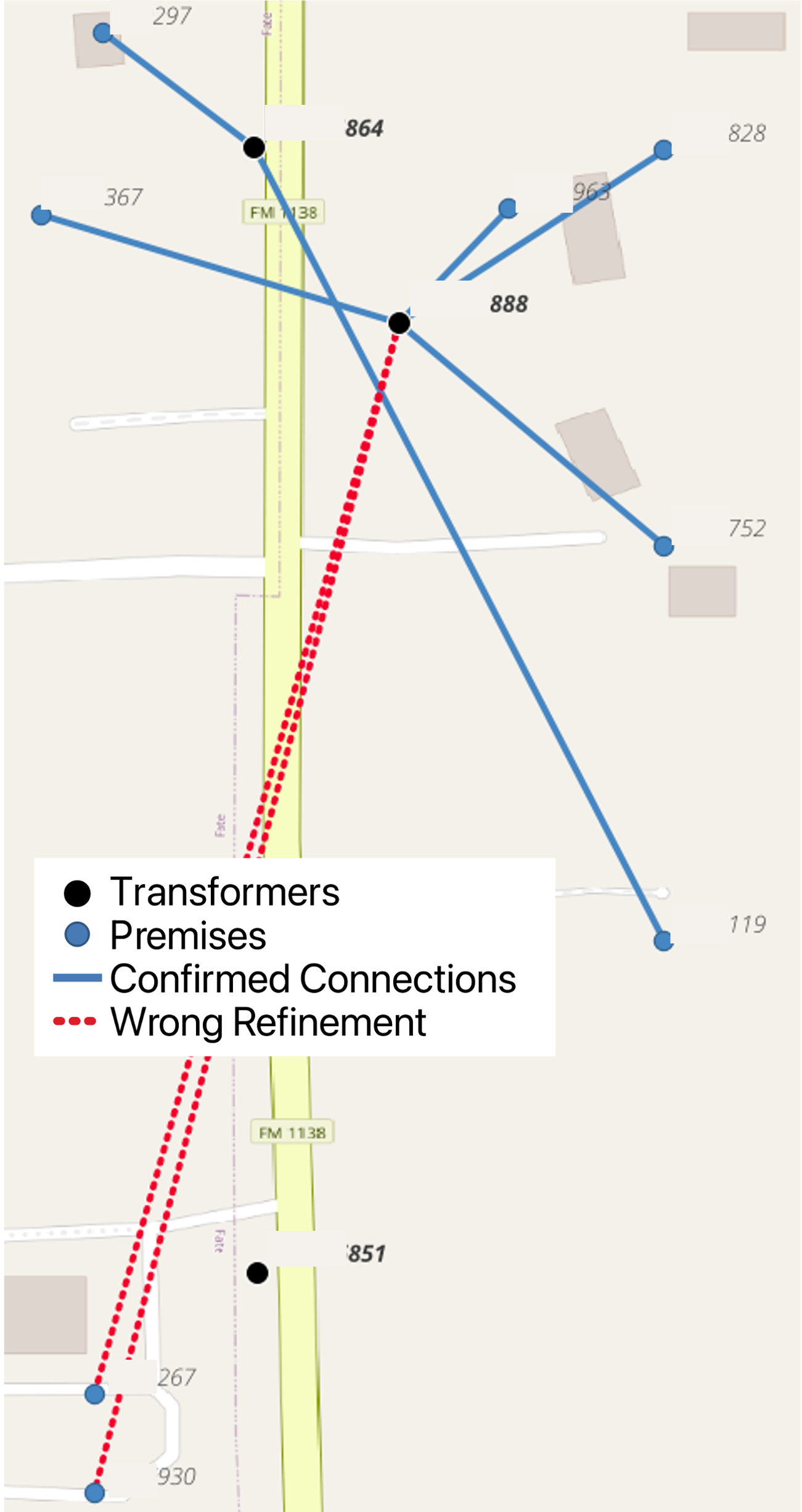}
        \caption{Correlation-only.}
    \end{subfigure}
    \hfill
    \begin{subfigure}[b]{0.15\textwidth}
        \includegraphics[width=\textwidth]{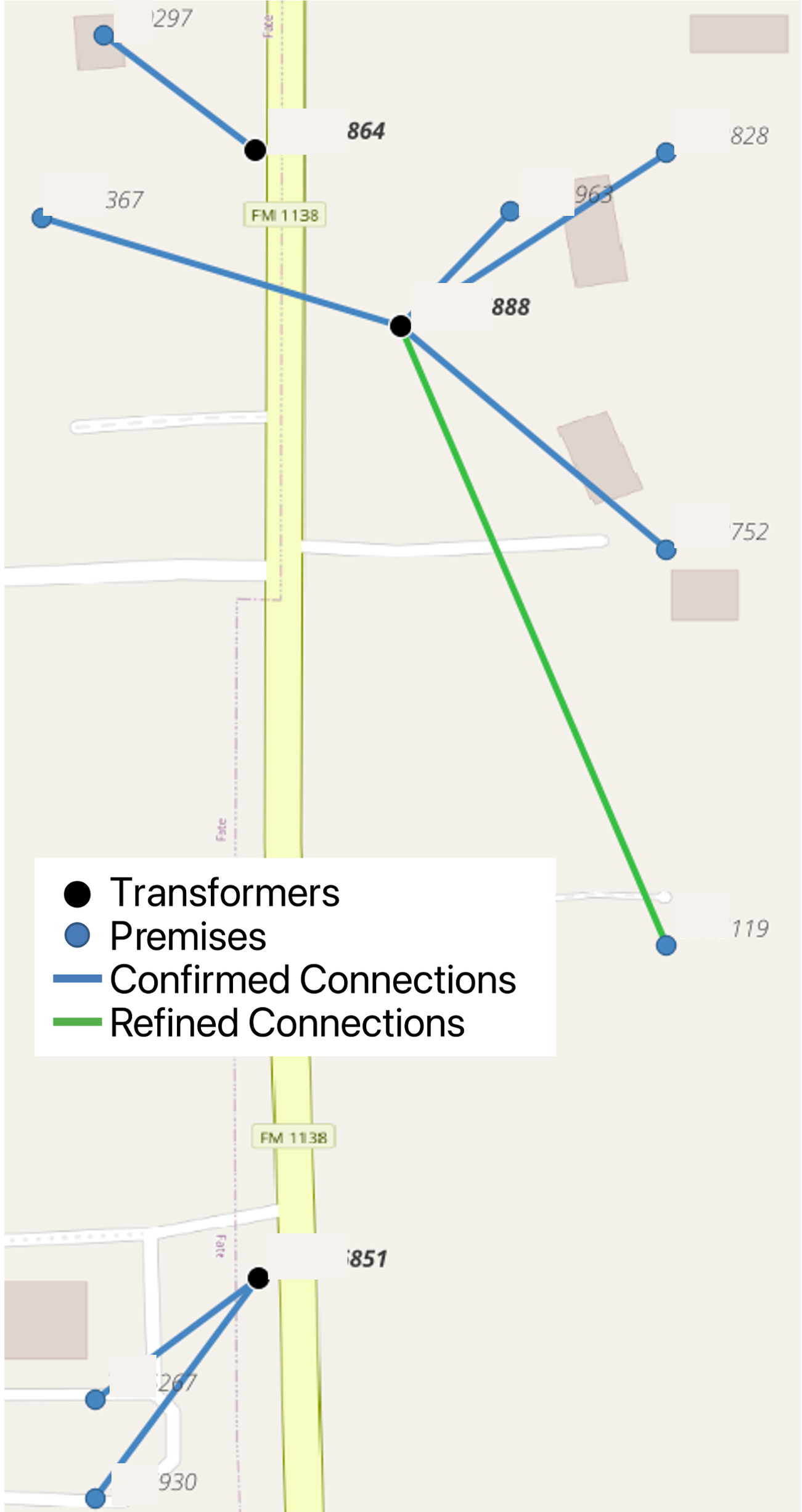}
        \caption{Proposed.}
    \end{subfigure}
    \caption{Topology correction comparison.}
    \label{fig:QGIS_of_Case_b}
    \vspace{-3mm}
\end{figure}
Figure~\ref{fig:QGIS_of_Case_b} compares database topology, correlation-only refinement, and the proposed constrained inference results. Electrical-only methods incorrectly merge neighboring transformer service regions, whereas constrained inference restores physically consistent assignments. By combining spatial feasibility, electrical consistency, and constraint-guided reassignment, the proposed solver corrects localized topology errors while preserving already correct connectivity, validating the localized refinement strategy developed in Section~V.

\vspace{-3mm}
\subsection{Reliability Evaluation for Deployment}

To support field deployment, inferred connections are assigned confidence levels measuring how strongly chosen assignments outperform alternative feasible mappings. Figure~\ref{fig:confidence_show} illustrates confidence improvements after constrained reassignment. Corrected assignments exhibit clear confidence increases, while already correct mappings remain stable, indicating the solver strengthens reliable assignments rather than introducing instability. Across feeders, confidence levels consistently improve, enabling utilities to prioritize verification only where ambiguity remains.

\begin{figure}[H]
\centering
\includegraphics[width=\columnwidth]{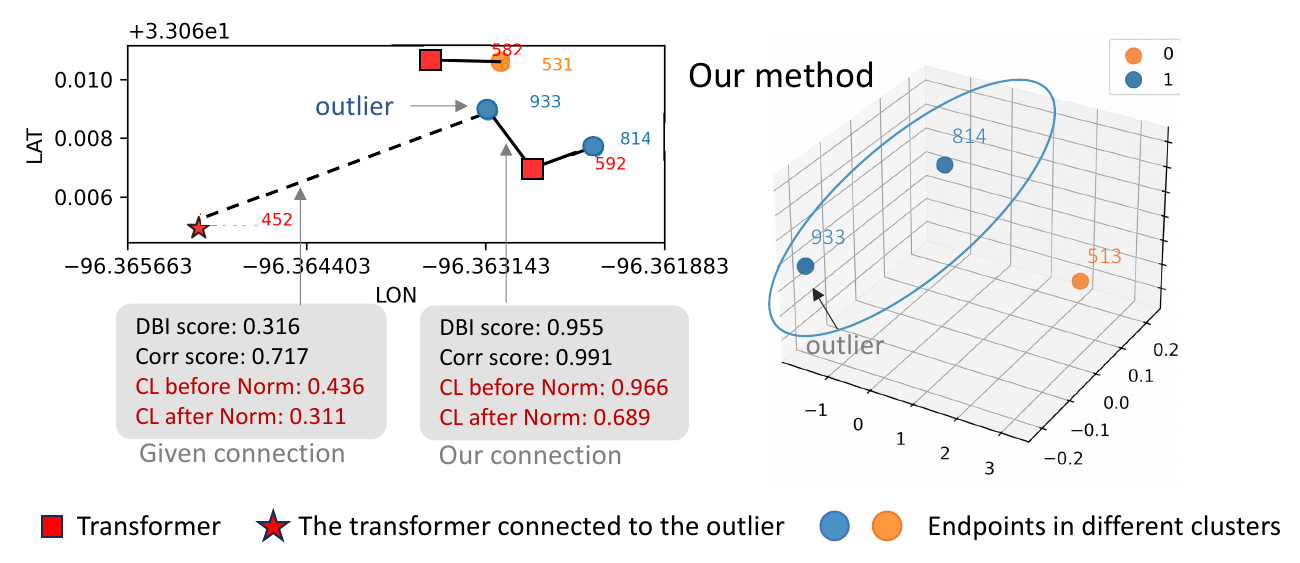}
\caption{Confidence improvement after constrained inference.}
\label{fig:confidence_show}
\vspace{-3mm}
\end{figure}

\vspace{-3mm}
\subsection{Physical Constraint Validation}

Assignments must also satisfy operational feasibility constraints. Table~\ref{tab:capacity_violation} shows transformers whose aggregated loads exceed rated capacities under incorrect connectivity. Constraint-guided reassignment resolves overload conditions, as illustrated in Figure~\ref{fig:overload_refinement}, confirming that refined topology estimates remain physically feasible.

\begin{table}[h!]
\centering
\caption{Example Transformer Capacity Violations}
\label{tab:capacity_violation}
\footnotesize
\begin{tabular}{c|c|c|c|c}
\toprule
ID & Rating & Peak & Limit & Status \\
\midrule
xxxxxx500 & 10 & 16.6 & 8 & Violation \\
xxxxxx802 & 10 & 11.3 & 8 & Violation \\
xxxxxx400 & 10 & 11.3 & 8 & Violation \\
xxxxxx041 & 10 & 9.4 & 8 & Violation \\
xxxxxx970 & 10 & 8.8 & 8 & Violation \\
xxxxxx683 & 15 & 13.0 & 12 & Violation \\
xxxxxx376 & 10 & 9.4 & 8 & Violation \\
xxxxxx718 & 15 & 12.8 & 12 & Violation \\
\bottomrule
\end{tabular}
\end{table}


\begin{figure}[t!]
    \centering
    \includegraphics[width=0.75\columnwidth]{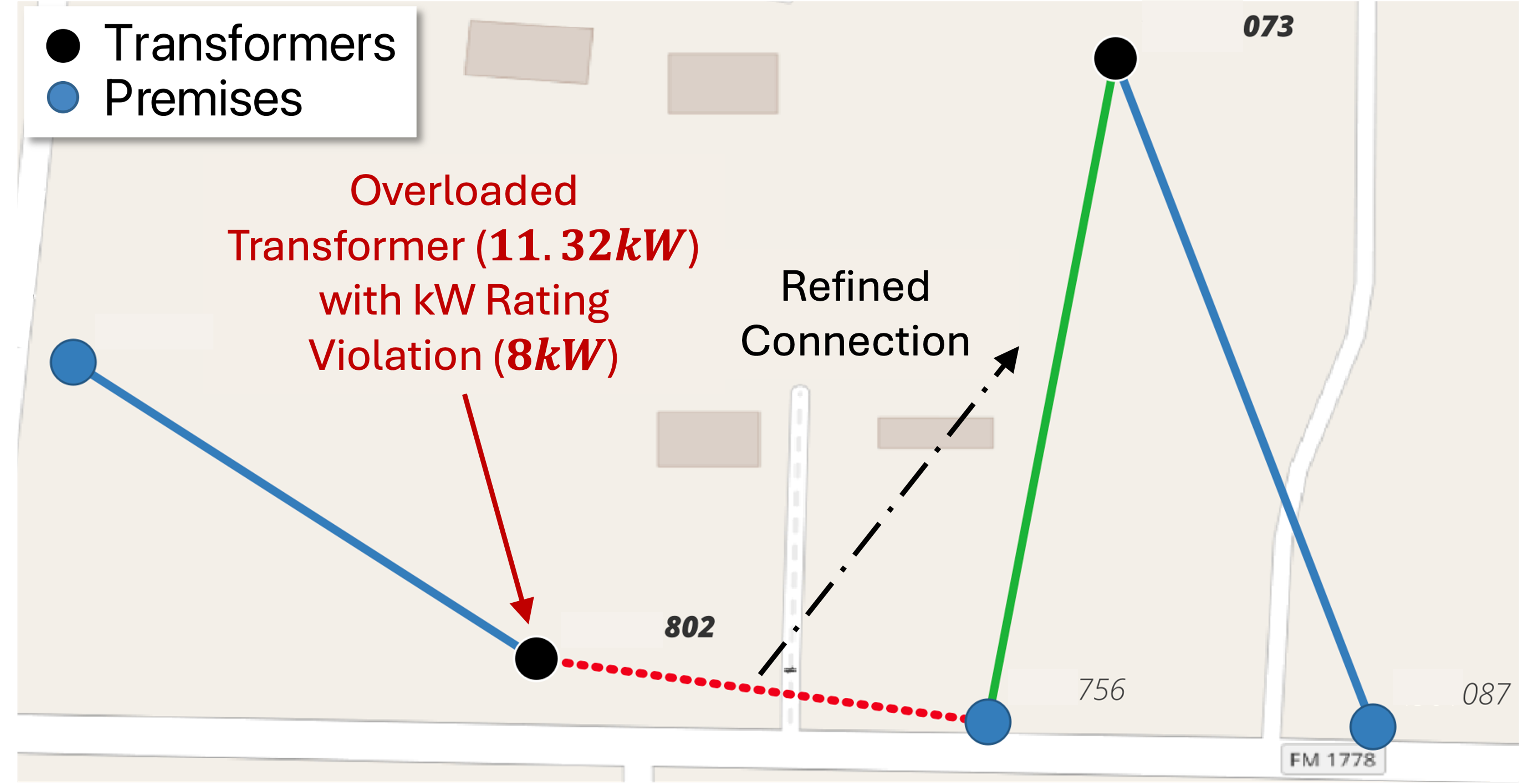}
    \caption{Constraint-guided correction resolving overload.}
    \label{fig:overload_refinement}
\end{figure}

\vspace{-3mm}
\subsection{Computational Scalability and Deployment Feasibility}

A key requirement for utility deployment is that topology refinement can be executed periodically across thousands of feeders without excessive computational cost. Consistent with the complexity analysis in Section~V, scalability is achieved by restricting inference to localized candidate transformer neighborhoods rather than recomputing feeder-wide connectivity. Electrical similarity and clustering operations are performed only within localized candidate sets, making practical runtime proportional to the number of detected inconsistencies rather than total feeder size.


\section{Conclusion and Future Work}
\label{sec:conclu}
This paper reformulates distribution topology identification as a constrained inference problem under heterogeneous and imperfect utility data. Instead of relying solely on electrical correlation or assuming perfect records, the proposed framework jointly enforces electrical consistency, spatial feasibility, and operational constraints to recover physically consistent connectivity while estimating the reliability of each inferred connection. Validation on large-scale operational feeders demonstrates that correlation-based approaches alone can fail in dense networks due to indistinguishable voltage behavior across neighboring transformers. By combining multi-source information with constraint-guided inference, the proposed method reliably corrects localized connectivity errors while preserving system-wide observability and maintaining computational scalability. The resulting reliability metric further supports practical deployment by enabling utilities to prioritize verification efforts. Overall, the results show that trustworthy topology models can be recovered even when data quality is uneven, provided inference explicitly incorporates physical and operational constraints. Future work will investigate extensions toward dynamic topology tracking and real-time validation in distribution system operations.

\appendices

\vspace{-3mm}
\section{Extended Confidence Evaluation Across Feeders}
\label{app:confidence_extended}

This appendix provides additional confidence evaluation results across multiple feeders. Figure~\ref{fig:test case} shows four representative feeders with distinct spatial configurations and loading patterns. For each case, confidence levels are evaluated before and after constrained inference.  Nodes with low confidence typically correspond to locations where neighboring transformers exhibit highly similar electrical behavior or where spatial records contain ambiguity. Across all cases, constrained inference consistently improves confidence scores for corrected assignments while preserving stable scores for already consistent mappings. This indicates that the proposed framework improves assignment reliability rather than merely altering connectivity. From an operational perspective, utilities can use confidence levels to prioritize manual verification only in ambiguous regions, reducing field inspection costs while preserving system-wide observability.

\begin{figure*}[t!]
    \centering
    \begin{subfigure}[b]{0.48\textwidth}
        \includegraphics[width=\textwidth]{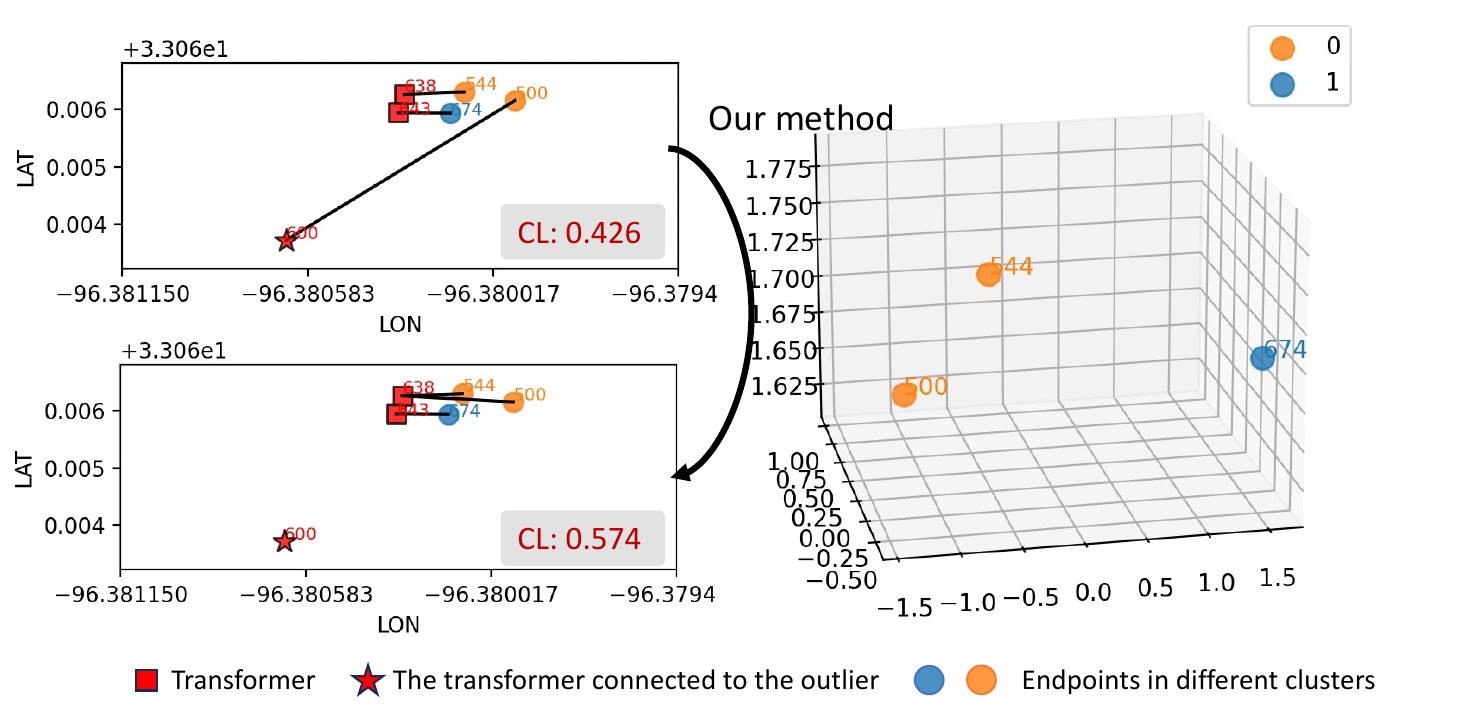}
        \caption{Case 1}
    \end{subfigure}
    \hfill
    \begin{subfigure}[b]{0.48\textwidth}
        \includegraphics[width=\textwidth]{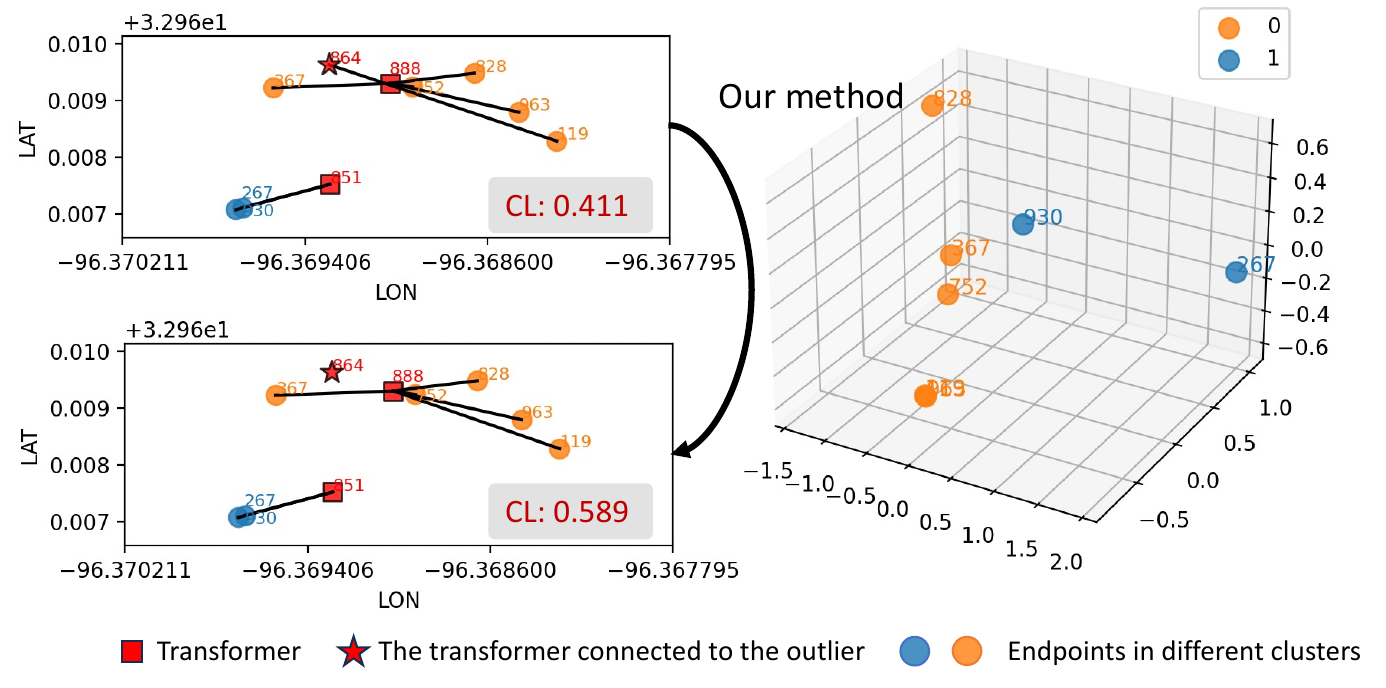}
        \caption{Case 2}
    \end{subfigure}
    \\[1em]
    \begin{subfigure}[b]{0.48\textwidth}
        \includegraphics[width=\textwidth]{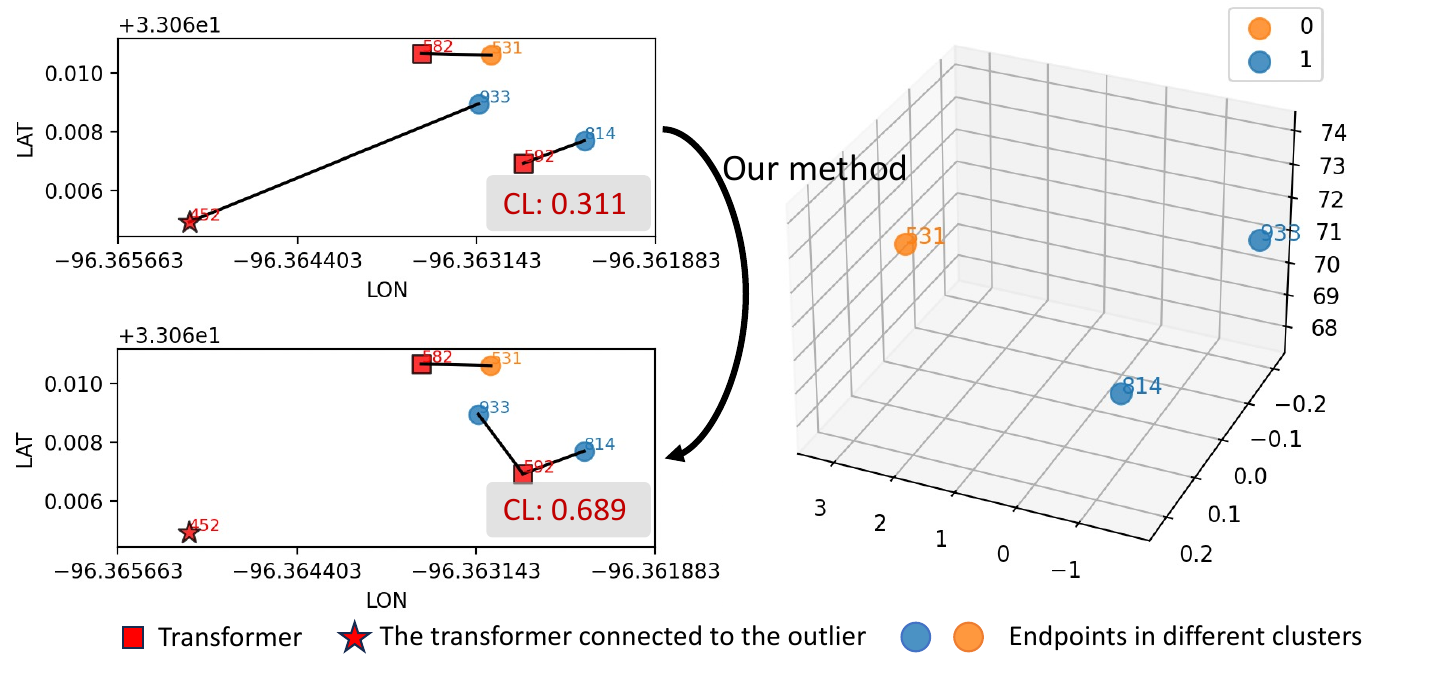}
        \caption{Case 3}
    \end{subfigure}
    \hfill
    \begin{subfigure}[b]{0.48\textwidth}
        \includegraphics[width=\textwidth]{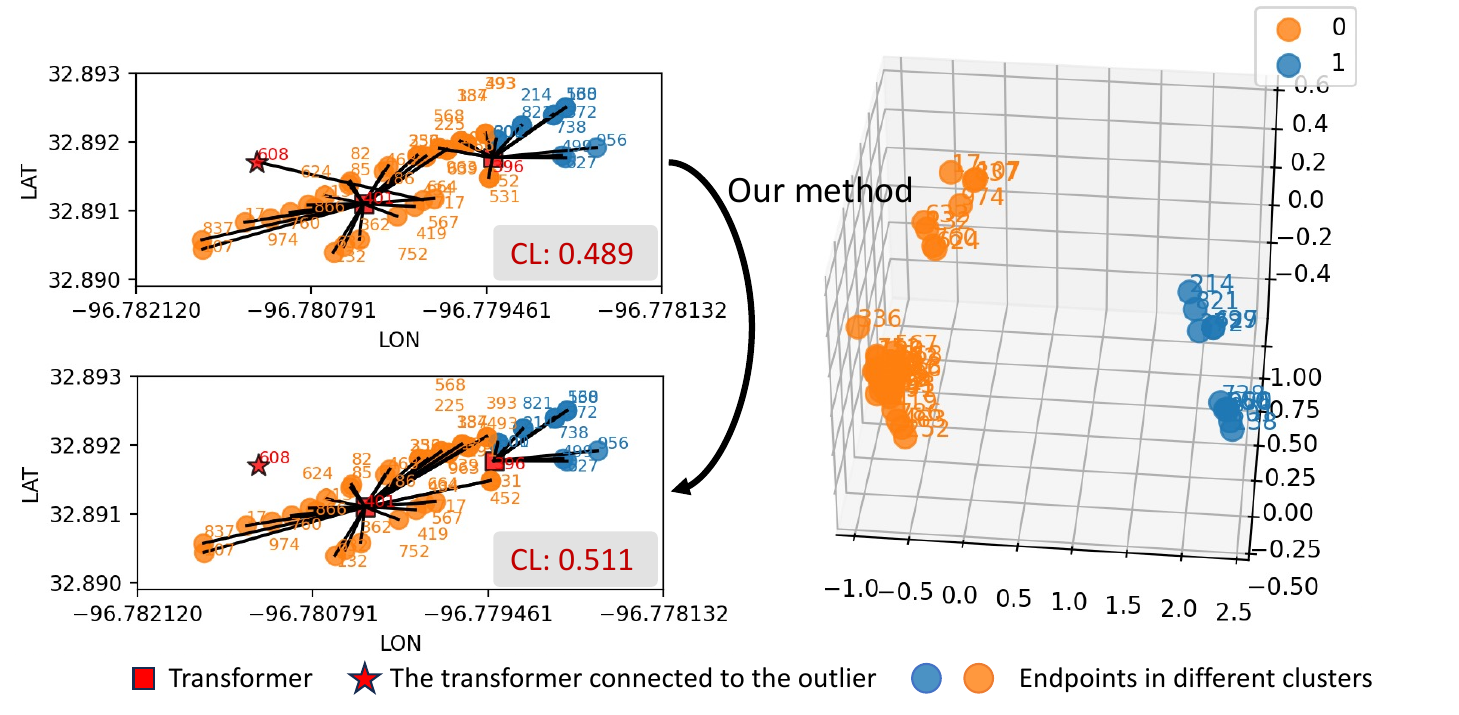}
        \caption{Case 4}
    \end{subfigure}
    \caption{Confidence evaluation across feeders with diverse configurations.}
    \label{fig:test case}
\end{figure*}

\vspace{-3mm}
\section{Sensitivity and Robustness Analysis}
\label{app:sensitivity}

A practical concern in localized topology refinement is whether inference results depend strongly on parameter choices controlling candidate transformer search regions. 
To evaluate robustness, we vary the number of candidate transformers considered during local reassignment and observe resulting assignments across four representative regions. Table~\ref{tab:outlier_assignments} summarizes assignments obtained under different candidate neighborhood sizes. Results remain unchanged for small and moderate candidate sets, indicating stable inference behavior. Only when candidate sets become unnecessarily large do minor assignment variations appear. Figure~\ref{fig:sensitivity analysis} further visualizes reassignment outcomes across these regions. The stability of assignments confirms that constrained inference relies primarily on local physical consistency rather than sensitive parameter tuning. 

\begin{table}[ht]
\centering
\caption{Transformer assignment results across regions under different candidate sizes.}
\begin{tabular}{c|c|c|c|c}
\toprule
 & Region 1 & Region 2 & Region 3 & Region 4 \\
\cline{1-5}
Outlier & xxxx500 & xxxx119 & xxxx933 & xxxx567 \\
\hline
TA (K=2) & xxxxxx638 & xxxxxx888 & xxxxxx592 & xxxxx401 \\
TA (K=3) & xxxxxx638 & xxxxxx888 & xxxxxx592 & xxxxx401 \\
TA (K=4) & xxxxxx632 & xxxxxx864 & xxxxxx592 & xxxxx520 \\
\bottomrule
\end{tabular}
\label{tab:outlier_assignments}
\end{table}

\begin{figure*}[h]
    \centering
    \begin{subfigure}[b]{0.45\textwidth}
        \includegraphics[width=\textwidth]{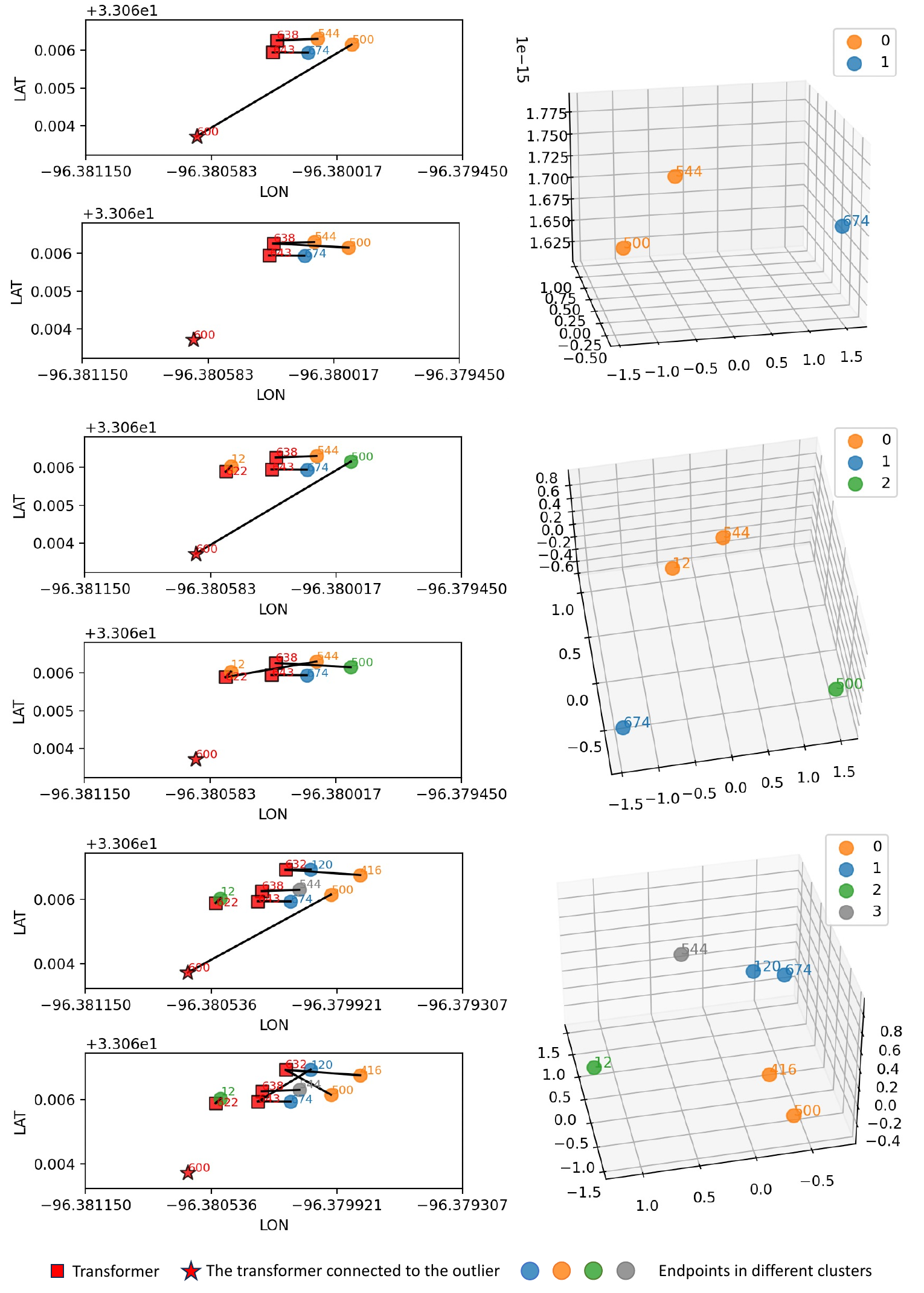}
        \caption{Region 1}
    \end{subfigure}
    \hfill
    \begin{subfigure}[b]{0.45\textwidth}
        \includegraphics[width=\textwidth]{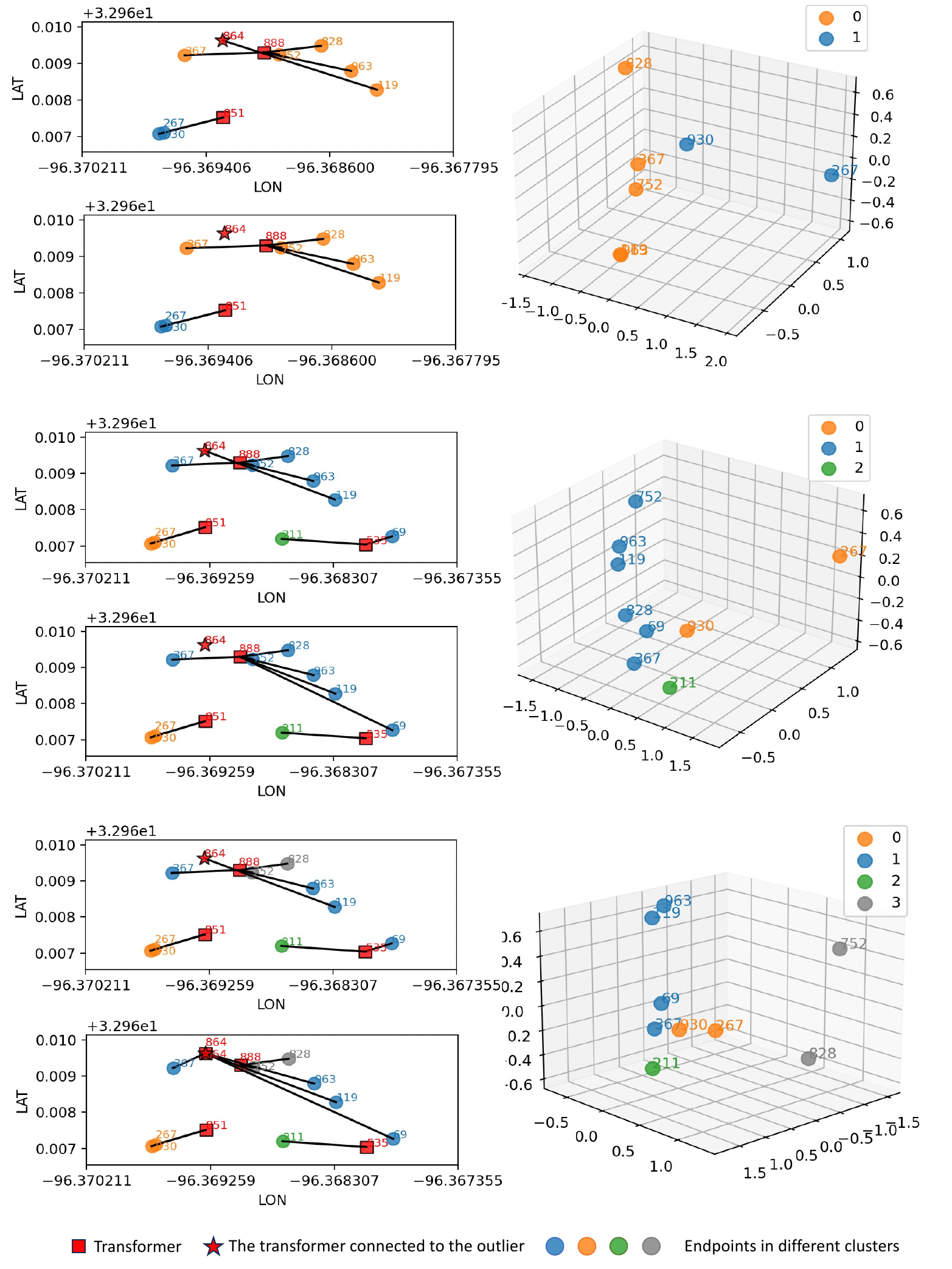}
        \caption{Region 2}
    \end{subfigure}
    \\[-0em]
    \begin{subfigure}[b]{0.45\textwidth}
        \includegraphics[width=\textwidth]{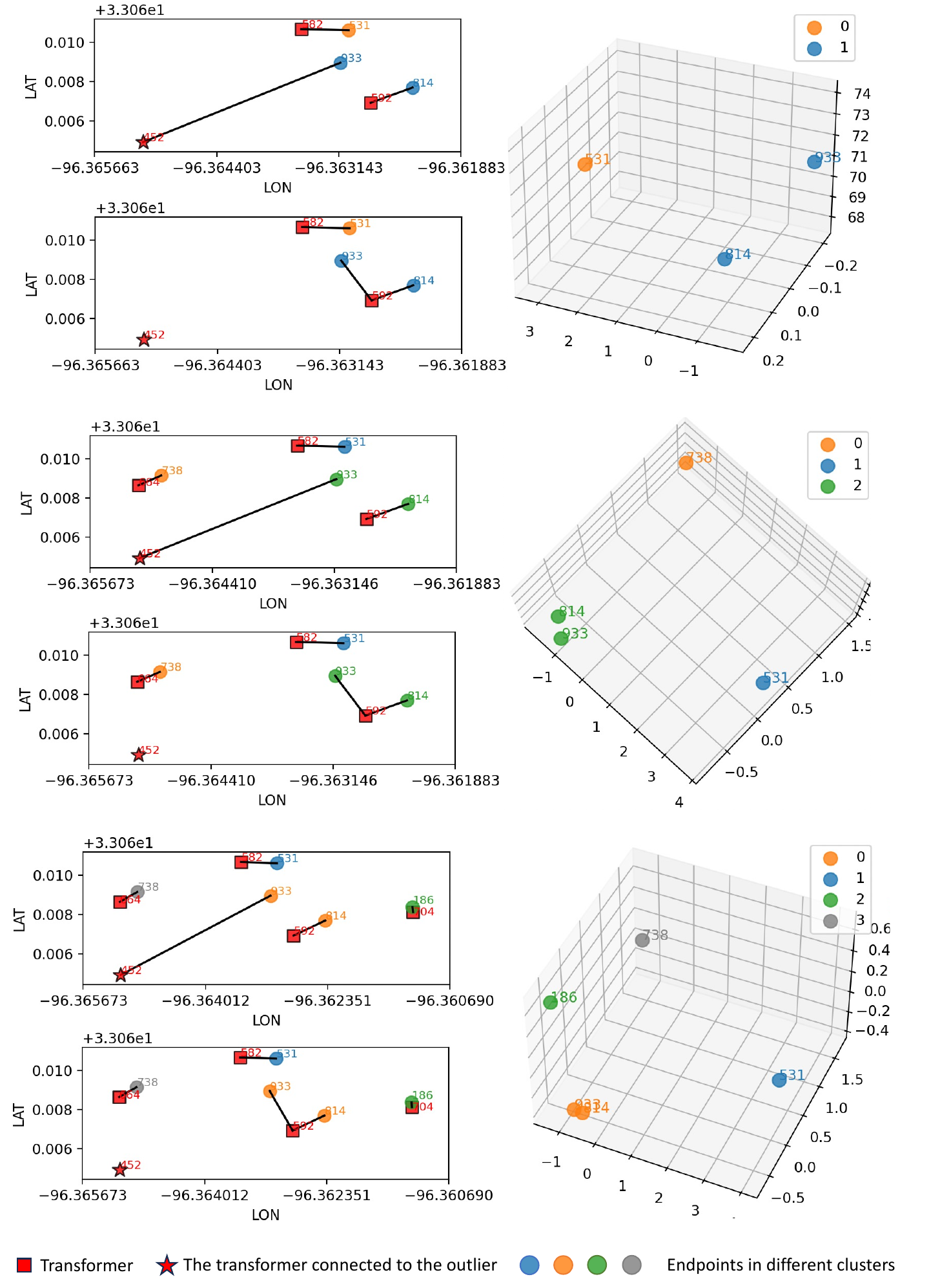}
        \caption{Region 3}
    \end{subfigure}
    \hfill
    \begin{subfigure}[b]{0.45\textwidth}
        \includegraphics[width=\textwidth]{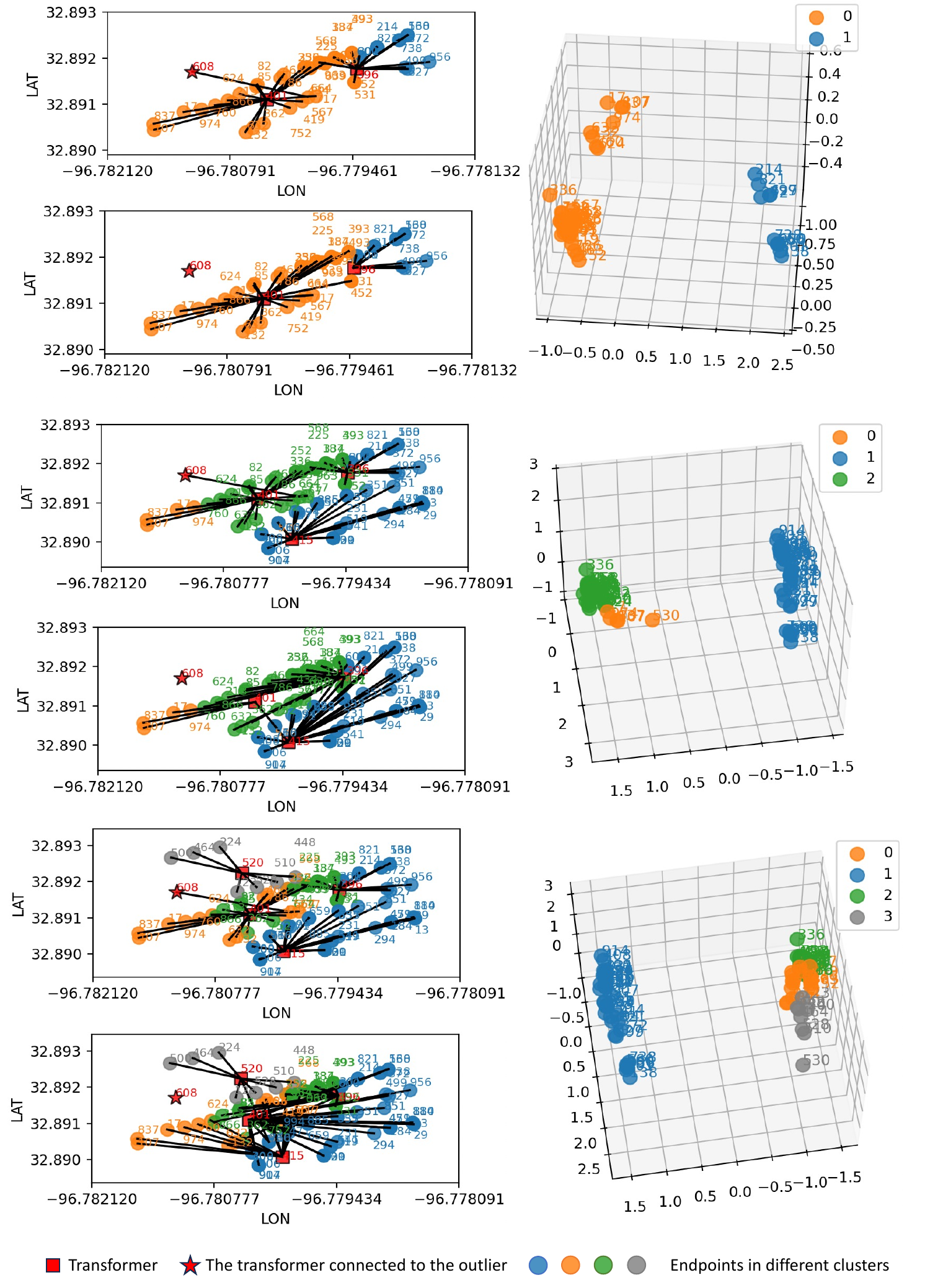}
        \caption{Region 4}
    \end{subfigure}
    \caption{Sensitivity analysis across regions.}
    \label{fig:sensitivity analysis}
\end{figure*}

\bibliographystyle{IEEEtran}
\bibliography{IEEEabrv,reference}

\end{document}